\def\ba#1\ea{\begin{align*}#1\end{align*}} 
\def\banum#1\eanum{\begin{align}#1\end{align}} 
\newtheorem{assumption}{Assumption}
\newcommand{\drm}{\mathrm{d}}
\newcommand{\krm}{\mathrm{k}}
\newcommand{\Xc}{\mathcal{X}}
\newcommand{\Fcal}{\mathcal{F}}
\newcommand{\Gcal}{\mathcal{G}}
\newcommand{\Hcal}{\mathcal{H}}
\newcommand{\Tcal}{\mathcal{T}}
\newcommand{\Qcal}{\mathcal{Q}}
\newcommand{\Bcal}{\mathcal{B}}
\newcommand{\Pb}{\mathbb{P}}
\newcommand{\Rb}{\mathbb{R}}
\newcommand{\Eb}{\mathbb{E}}
\newcommand{\biggg}[1]{{\hbox{$\left#1\vbox to 20.5pt{}\right.\n@space$}}}
\newcommand{\Biggg}[1]{{\hbox{$\left#1\vbox to 23.5pt{}\right.\n@space$}}}
\newcommand{\bigggg}[1]{{\hbox{$\left#1\vbox to 26.5pt{}\right.\n@space$}}}
\newcommand{\Bigggg}[1]{{\hbox{$\left#1\vbox to 29.5pt{}\right.\n@space$}}}
\newcommand{\biggggg}[1]{{\hbox{$\left#1\vbox to 32.5pt{}\right.\n@space$}}}
\newcommand{\Biggggg}[1]{{\hbox{$\left#1\vbox to 35.5pt{}\right.\n@space$}}}
\newcommand{\bigggggg}[1]{{\hbox{$\left#1\vbox to 38.5pt{}\right.\n@space$}}}
\newcommand{\Bigggggg}[1]{{\hbox{$\left#1\vbox to 41.5pt{}\right.\n@space$}}}
\newcommand{\bigggl}{\mathopen\biggg}
\newcommand{\bigggr}{\mathclose\biggg}
\title[Fast learning rate of deep learning without scale invariance]
{Fast generalization error bound of deep learning\\ without scale invariance of activation functions}
\begin{document}
\maketitle

\begin{abstract}%
In theoretical analysis of deep learning, 
discovering which features of deep learning lead to good performance is an important task.
In this paper, 
using the framework for analyzing the generalization error developed in \cite{Suzuki18}, 
we derive a fast learning rate for deep neural networks with more general activation functions.
In \cite{Suzuki18}, 
assuming the scale invariance of activation functions, 
the tight generalization error bound of deep learning was derived. 
They mention that the scale invariance of the activation function is essential to derive tight error bounds.
Whereas the rectified linear unit (ReLU; \citealp{NairHinton10}) satisfies the scale invariance, 
the other famous activation functions including the sigmoid and the hyperbolic tangent functions,
and the exponential linear unit (ELU; \citealp{ClevertEtAl16}) does not satisfy this condition.
The existing analysis indicates a possibility
that a deep learning with the non scale invariant activations may have a slower convergence rate of $O(1/\sqrt{n})$
when one with the scale invariant activations can reach a rate faster than $O(1/\sqrt{n})$.
In this paper, without the scale invariance of activation functions, 
we derive the tight generalization error bound which is essentially the same as that of \cite{Suzuki18}.
From this result, at least in the framework of \cite{Suzuki18}, 
it is shown that the scale invariance of the activation functions is not essential to get the fast rate of convergence. 
Simultaneously, it is also shown that
the theoretical framework proposed by \cite{Suzuki18} can be widely applied for analysis of deep learning with general activation functions.
\end{abstract}

\begin{keywords}%
  Deep Learning, Fast Learning Rate,  Empirical Risk Minimizer, Sigmoid Activation Function, Exponential Linear Unit%
\end{keywords}

%
\section{Introduction}

For various application tasks including computer vision and natural language processing,
deep learning has achieved great performance and has made a significant impact on the related fields.
It is important to provide theoretical answers for the question of why deep learning provides great performance.

One of the important features of deep learning is its universal approximation ability and high expressive power.
In 1989, \cite{Cybenko89}, \cite{HornikEtAl89}, and \cite{Funahashi89} independently proved that the shallow neural network with one hidden layer has the universal approximation ability. 
That is, for large enough number of hidden nodes, every continuous function on the bounded subset in $\mathbb{R}^d$ can be approximated arbitrarily-well, 
uniformly over the bounded set, by neural networks with one hidden layer.
Moreover, 
\cite{Murata96}, \cite{Candes99}, and \cite{SonodaMurata17} consider the neural network 
as the discretization of its integral representation, and clarify its approximation ability.
Recent years, in terms of expressive power, 
there are many theoretical answers to why the deep neural networks are preferred to shallow ones.
For example, 
\cite{EldanShamir16} show that there exists a simple function on $\Rb^d$, 
expressible by a small $4$-layer neural networks with polynomial order widths,
which cannot be arbitrarily well approximated by any shallow network with one hidden layer,
unless its width is exponential in the input dimension $d$.
It has been shown that
deep neural networks are more expressive than shallow ones of comparable size
\citep{BengioDelalleau11,BianchiniScarselli14, MontufarEtAl14,CohenEtAl16,PooleEtAl16, Yarotsky17, PetersenVoigtlaender18}

Another important point in theoretical analysis of machine learning algorithms
is to derive its generalization error.
From the discussion of no-free-lunch theorem or the slow rates of convergence (see, Chapter 7 in \cite{DevroyeEtAl96}),
we know that universally good algorithms do not exist.
Thus, it is important to clarify when the fast learning rate can be achieved, 
or what the fastest achievable rate is for the given class of distributions.
For example, in \cite{KoltchinskiiPanchenko02}, \cite{NeyshaburEtAl15}, and \cite{SunEtAl15},  
generalization bounds for neural networks were derived by evaluating the Rademacher complexity.
For deep neural networks with rectified linear units (ReLU) \citep{NairHinton10}, 
\cite{Schmidt-Hieberi18}, \cite{ImaizumiFukumizu18} , and \cite{Suzuki19} give deep insights into
why deep neural networks outperform other existing methods such as the kernel ridge regression in practice through its theoretical analysis.
Moreover, in recent years, for deep neural networks with ReLU activation functions,
there are several theoretical answers for why deep learning can avoid the curse of dimensionality (\citealp{BarrobKlusowski18,Suzuki19}).
In \cite{Suzuki18}, focusing on the integral representation in \cite{SonodaMurata17},
a new important theoretical framework to analyze the generalization error of deep learning is developed.
Using this framework, the approximation error of deep neural networks with scale invariant activation functions, 
which can be interpreted as the discretization of its integral representation,
can be evaluated by the degree of freedom of the reproducing kernel Hilbert space (RKHS) induced in each layer, 
which is introduced in \cite{Bach17b}. 
Moreover, through the analysis of the generalization error,
we can see bias-variance trade-off in terms of the number of parameters in the deep neural network.
From these results, 
we can determine the optimal widths (the number of nodes) of the internal layers, and 
can derive the optimal convergence rate faster than $O(1/\sqrt{n})$.

In this paper, we focus on the fast rate of convergence for deep learning.
Most theoretical studies deal with deep neural networks with the ReLU activation functions.
\cite{Suzuki18} considers a wider class of activation functions that satisfy the scale invariance.
The ReLU and leaky ReLU (LReLU; \citealp{MaasEtAl13}) functions satisfy the scale invariance
whereas other famous activation functions such as the sigmoid and the hyperbolic tangent activation functions do not satisfy this condition.
\cite{Suzuki17, Suzuki18} mentioned that the scale invariance of the activation function is essential to derive tight error bounds
and also indicated that we only have a much looser bound for the approximation error without the scale invariance.
Here, we note that, in \cite{Suzuki18}, 
the generalization error of the deep neural networks without the scale invariance is not derived,
whereas the approximation error is derived in \cite{Suzuki17}.
If we believe that the variance term of the generalization error is the same even for the deep neural networks with the non scale invariant activation function, 
the existing results indicate that 
a deep learning with the non scale invariant activations may have a slower convergence rate of $O(1/\sqrt{n})$
even when one with the scale invariant activations can reach a convergence rate faster than $O(1/\sqrt{n})$.
In contrast, recently, \cite{ClevertEtAl16} propose a new important activation function, called exponential linear unit (ELU).
Although the ELU function is not scale invariant, 
the ELU function does not only speed up learning in deep neural networks 
but also leads to higher (or comparable) classification accuracies compared to other activation functions 
including the ReLU and LReLU functions in many practical situations.
Thus, it is natural to ask an important question ``Is the scale invariance of the activation functions essential to get the fast rate of convergence for deep learning?"
In this paper, we will show that, even without the scale invariant assumption on the activation functions,
the tight upper bound in \cite{Suzuki18} can be derived in the same theoretical framework.
Whereas the overall flow of the proofs of our results is the same as \cite{Suzuki18}, 
the detail parts of the proofs are different, and the proofs in this paper are not trivial.
From our improved results, 
we can see that the theoretical framework developed in \cite{Suzuki18} can be widely applied for deep neural networks with general activation functions, 
and we can provide the unified practical approach described in \cite{Suzuki18} for the determination of the widths on the internal layers.

This paper is organized as follows.
In Section~\ref{sec:2}, we set up notation and terminology used in \cite{Suzuki18}.
In Section~\ref{sec:3}, we introduce some assumptions and derive a tight upper bound of the approximation error 
under the no assumption of the scale invariance of the activation functions.
Section~\ref{sec:4} provides the upper bound of the generalization error for deep learning without the scale invariance.
Under no assumption of the scale invariance, 
we also discuss the difference between our tight error bound and the looser error bound which can be derived from the looser approximation error bound of \cite{Suzuki17}
in Section~\ref{sec:4}.

%

%
\section{Preliminaries}\label{sec:2}

Since we employ the theoretical framework developed in \cite{Suzuki17, Suzuki18} to derive the generalization error for general feedforward deep neural networks,
we follow the notation used in \cite{Suzuki18}.
We assume in this paper that the data $D_n=(X_i,Y_i)_{i=1}^n$ is a sequence of independently identically generated from the following model:
\[
Y_i = f^{o}(X_i) + \xi_i\quad(i = 1,\dots,n),
\]
where $(\xi_i)_{i=1}^n$ is an i.i.d. sample from Gaussian distribution $N(0,\sigma^2)$ with mean $0$ and variance $\sigma^2$, and 
$(x_i)_{i=1}^n$ is an i.i.d. sample from the distribution $P_X$ on $\mathbb{R}^{d_x}$ whose support is compact. 
Here, we consider the regression problem in which we estimate $f^o$ from data $D_n$ by deep neural networks.
Using the ridgelet analysis, 
\cite{SonodaMurata17} shows that, for any function $f\in L_1(\Rb^{d_X})$ which has an integrable Fourier transform, 
there exists the following integral form with an activation function $\eta$ such as ReLU:
\banum
f(x) = \int h(w,b)\eta(w^Tx + b)\,\drm w\drm b + b^{(2)}, \label{eq:integral-rep}
\eanum
where $(w,b)\in \Rb^{d_X}\times \Rb$, $b^{(2)}\in \Rb$, and $h : \Rb^{d_X}\times \Rb \rightarrow \Rb$.
For a $d$-dimensional vector $x=(x_1,\dots,x_d)\in\Rb^d$, write
$\eta(x) = (\eta(x_j))_{j=1}^d$.
The shallow neural network with one internal layer is represented as
\[
f(x) 
= W^{(2)}\eta( W^{(1)}x+b^{(1)}) + b^{(2)}
=\sum_{i=1}^{m_2}w_{i}^{(2)}\eta\left( w_i^Tx + b_i\right) + b^{(2)},
\]
where $m_2$ is the number of nodes in the internal layer, 
$W^{(2)} \in \mathbb{R}^{1\times m_2},\;W^{(1)}=(w_1,\dots,w_{m_2})^T\in \mathbb{R}^{m_2\times d_x}$, and $b^{(1)}\in \mathbb{R}^{m_2}$, $b^{(2)}\in \mathbb{R}$.
Thus, the shallow neural network can be considered as the discretization of the integral form $(\ref{eq:integral-rep})$.

It is known that deep neural networks are more expressive than shallow ones of comparable size (see, e.g., \citealp{EldanShamir16}).
Thus, we will consider an integral representation for a deeper neural network.
To construct the integral form, we define the feature space of the $\ell$-th layer as a probability space.
Let $(\mathcal{T}_\ell, \mathcal{B}_\ell, \mathcal{Q}_\ell)$ be a probability space 
where $\mathcal{T}_\ell$ is a Polish space, $\Bcal_\ell$ is its Borel algebra, and $\Qcal_\ell$ is a probability measure on $(\Tcal_\ell, \Bcal_\ell)$.
If the $\ell$-th internal layer is not continuous and has $d_\ell$ nodes, then simply $\Tcal_\ell = \{1,\dots,d_\ell\}$.
Since the input space is $d_x$-dimensional, $\Tcal_1=\{1,\dots,d_x\}$.
In our setting, the feature space of the output layer is singleton $\Tcal_{L+1}=\{1\}$.
In the integral form $(\ref{eq:integral-rep})$,
the feature space on the second layer is corresponding to the continuous space $\Tcal_2=\{(w,b)\in \Rb^{d_x}\times \Rb\}$.
The map $f_\ell^o:L_2(\Qcal_\ell)\rightarrow L_2(\Qcal_{\ell+1})$ on the $\ell$-th layer is defined by
\[
f_\ell^o[g](\tau) = \int_{\Tcal_\ell} h_\ell^o(\tau,w)\eta(g(w))\drm \Qcal_\ell(w) + b_\ell^o(\tau),
\]
where $h_\ell^o(\tau,w)$ is the weight of the feature $w$ for the output $\tau$, 
and $h_\ell^o\in L_2(\Qcal_{\ell+1}\times \Qcal_\ell)$ and $h_\ell^o(\tau,\cdot)\in L_2(\Qcal_\ell)$ for all $\tau \in \Tcal_{\ell+1}$.
Now, we construct the integral representation of deep neural network as follows:
\banum\label{eq:deep-integral-rep}
f^o(x)=f_L^o\circ f_{L-1}^o\circ \dots \circ f_1^o(x),
\eanum
where $f_\ell^o\circ F(x)$ denotes $f_\ell^o[F(x)](\cdot) \in L_2(\Qcal_{\ell+1})$ for $F(x)(\cdot) \in L_2(\Qcal_\ell)$,
\ba
f_1^o[x](\tau) &= \sum_{j=1}^{d_x} h_1^o(\tau,j)x_j\Qcal_1(j) + b_1^o(\tau),\;
f_L^o[g](1) = \int_{\Tcal_L}h_L^o(1,w)\eta(g(w))\drm \Qcal_L(w) + b_L^o, \text{ and}\nonumber\\
f_\ell^o[g](\tau) &= \int_{\Tcal_\ell} h_\ell^o(\tau,w)\eta(g(w))\drm \Qcal_\ell(w) + b_\ell^o(\tau)\quad(\ell = 2, \dots, L-1).
\ea
We shall assume that the true function $f^o$ has this integral representation $(\ref{eq:deep-integral-rep})$.
Since the neural network with one internal layer is a universal approximator,
the deep neural network model $(\ref{eq:deep-integral-rep})$ can be also a universal approximator.
%

%
\section{Finite approximation error bound of the integral representation}\label{sec:3}

To estimate $f^o$ from data $D_n$, 
it is necessary to discretize the integrals in the form $(\ref{eq:deep-integral-rep})$ by finite sums.
The usual deep neural network model can be interpreted as the discrete approximation of the integral form.
Here, under no assumption of the scale invariance of the activation functions,
we derive a tight upper bound of the approximation error of the discretization by employing the notions in the kernel method.

According to \cite{Suzuki18}, 
we construct the reproducing kernel Hilbert space (RKHS) for each layer.
Let $F_\ell^o(x,\tau):=(f_\ell^o\circ \dots f_1^o(x))(\tau)$ be the output of the $\ell$-th layer.
For $\ell\ge 2$, we define the kernel $\krm_\ell : \Rb^{d_x}\times \Rb^{d_x}\rightarrow \Rb$ corresponding to the $\ell$-th layer as
\[
\krm_\ell(x,x^\prime):=
\int_{\Tcal_\ell}\eta(F_{\ell-1}^o(x,\tau))\eta(F_{\ell-1}^o(x^\prime,\tau))\drm \Qcal_\ell(\tau).
\]
By the Moore-Aronszajn theorem \citep{Aronszajn50}, 
there exists a unique RKHS $\Hcal_\ell$ corresponding to the kernel $\krm_\ell$.
We consider the following bounded linear operator $S_\ell :  L_2(\Qcal_\ell) \rightarrow  L_2(P_X)$:
\[
(S_\ell h)(x) = \int_{\Tcal_\ell} h(\tau)\eta(F_{\ell-1}^o(x,\tau))\drm \Qcal_\ell(\tau).
\]
\cite{Bach17a, Bach17b} show that the image of $S_\ell$ is equivalent to RKHS $\Hcal_\ell$ and that
the norm $\|g\|_{\Hcal_\ell}^2$ for $g \in \Hcal_\ell$ is equal to the minimum of $\|h\|_{L_2(\Qcal_\ell)}^2$ over all $h$ such that $S_\ell h = g$.
Thus, the function $x\mapsto \int_{\Tcal_\ell} h_\ell^o(\tau,w)\eta(F_{\ell-1}^o(x,w))\,\drm\Qcal_\ell(w)$ is in the RKHS $\Hcal_\ell$, and
its RKHS norm is equal to the norm of weight function of the internal layer $\|h_\ell^o(\tau,\cdot)\|_{L_2(\Qcal_\ell)}$.

Now, we define the complexity of the RKHS introduced in \cite{Bach17b}.
We will consider the following integral operator $T_\ell :  L_2(P_X) \rightarrow  L_2(P_X)$:
\[
(T_\ell g)(x) = \int_{\Xc} \krm_\ell(\cdot, x)g(x)\drm P_X(x).
\]
By Mercer's theorem, we obtain the following decomposition:
\[
\krm_\ell(x,x^\prime)=\sum_{j=1}^\infty \mu_j^{(\ell)} \phi_j^{(\ell)}(x)\phi_j^{(\ell)}(x^\prime),
\]
where $(\mu_j^{(\ell)})_{j=1}^\infty$ is the sequence of the eigenvalues of $T_\ell$ ordered in decreasing order, and 
$(\phi_j^{(\ell)})_{j=1}^\infty$ is the sequence of the corresponding eigenfunctions, 
which forms an orthonormal system in $L_2(P_X)$.
For $\lambda>0$, the degree of freedom of the RKHS $\Hcal_\ell$ is defined by
\banum\label{eq:5}
N_\ell(\lambda)=\mathrm{tr}\{(T_\ell+\lambda I)^{-1}T_\ell\}
=\sum_{j=1}^\infty \frac{\mu_j^{(\ell)}}{\mu_j^{(\ell)} + \lambda},
\eanum
which is analogous to a traditional quantity in the analysis of least-squares regression.
Through the discussion of Section 4.2 in \cite{Bach17b}, 
we can intuitively consider that this complexity measures an effective dimension of $\Hcal_\ell$.
It is worth noting that $N_\ell(\lambda)$ is monotonically decreasing with respect to $\lambda$.
The following lemma, which is the direct consequence from Proposition 1 in \cite{Bach17b},
plays a key role in the approximation error analysis.
\begin{proposition}\label{prop:1}
For any $\lambda>0$ and any $1/2>\delta>0$, if
\[
m_\ell \ge 5N_\ell(\lambda)\log(16N_\ell(\lambda)/\delta)\quad(\ell = 2,\dots,L),
\]
then there exists $v_1^{(\ell)},\dots,v_{m_\ell}^{(\ell)}\in \Tcal_\ell$ and $w_1^{(\ell)},\dots,w_{m_\ell}^{(\ell)}>0$ such that
\[
\sup_{\|f\|_{\Hcal_\ell}\le R}\inf_{\beta^{(\ell)}\in \Rb^{m_\ell}:\|\beta^{(\ell)}\|_2^2\le \frac{4R^2}{m_\ell}}
\left\|f - \sum_{j=1}^{m_\ell}\beta_j^{(\ell)}w_j^{(\ell)}\eta(F_{\ell-1}(\cdot,v_j^{(\ell)})) \right\|_{L_2(P_X)}^2
\le 4\lambda R^2,
\]
and
\[
\frac{1}{m_{\ell}}\sum_{j=1}^{m_\ell} {w_j^{(\ell)}}^2 \le (1-2\delta)^{-1}.
\]
\end{proposition}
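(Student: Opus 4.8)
The plan is to recognize Proposition~\ref{prop:1} as an instance of the random-feature quadrature bound of \cite{Bach17b}, applied to the feature map attached to the $\ell$-th layer, and then to extract the two stated conclusions by a short probabilistic existence argument. First I would fix the dictionary between the two frameworks. I take the index space to be $(\Tcal_\ell,\Bcal_\ell,\Qcal_\ell)$, and for each $\tau\in\Tcal_\ell$ I set $\varphi_\tau:=\eta(F_{\ell-1}^o(\cdot,\tau))\in L_2(P_X)$. By the definition of $\krm_\ell$ we have $\krm_\ell(x,x')=\int_{\Tcal_\ell}\varphi_\tau(x)\varphi_\tau(x')\,\drm\Qcal_\ell(\tau)$, so $\krm_\ell$ is exactly the kernel generated by this feature map. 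A direct computation shows that the covariance operator $\int_{\Tcal_\ell}\varphi_\tau\otimes\varphi_\tau\,\drm\Qcal_\ell(\tau)$ coincides with the integral operator $T_\ell$ of Section~\ref{sec:3}, so that the degree of freedom appearing in \cite{Bach17b} is precisely $N_\ell(\lambda)=\mathrm{tr}\{(T_\ell+\lambda I)^{-1}T_\ell\}$ from $(\ref{eq:5})$. The only hypotheses to check are that each $\varphi_\tau$ lies in $L_2(P_X)$ and that $T_\ell$ is trace class, both of which follow from the compactness of $\mathrm{supp}(P_X)$ together with the Mercer decomposition already invoked above.

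Next I would introduce the optimized sampling density of \cite{Bach17b},
\[
q_\lambda(\tau):=\frac{1}{N_\ell(\lambda)}\,\big\langle \varphi_\tau,\,(T_\ell+\lambda I)^{-1}\varphi_\tau\big\rangle_{L_2(P_X)},
\]
which is a probability density with respect to $\Qcal_\ell$ because $\int_{\Tcal_\ell}\langle\varphi_\tau,(T_\ell+\lambda I)^{-1}\varphi_\tau\rangle\,\drm\Qcal_\ell(\tau)=\mathrm{tr}\{(T_\ell+\lambda I)^{-1}T_\ell\}=N_\ell(\lambda)$. I would then draw $v_1^{(\ell)},\dots,v_{m_\ell}^{(\ell)}$ i.i.d.\ from $q_\lambda\,\Qcal_\ell$ and set $w_j^{(\ell)}:=q_\lambda(v_j^{(\ell)})^{-1/2}>0$. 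Proposition~1 of \cite{Bach17b}, whose sample-size requirement is exactly $m_\ell\ge 5N_\ell(\lambda)\log(16N_\ell(\lambda)/\delta)$, then guarantees that with probability at least $1-\delta$ the rescaled features $w_j^{(\ell)}\varphi_{v_j^{(\ell)}}=\varphi_{v_j^{(\ell)}}/\sqrt{q_\lambda(v_j^{(\ell)})}$ approximate the unit $\Hcal_\ell$-ball to within squared $L_2(P_X)$-error $4\lambda$ with coefficients of squared norm at most $4/m_\ell$. Rescaling $f=R\tilde f$ with $\|\tilde f\|_{\Hcal_\ell}\le 1$ turns this into the stated bound $4\lambda R^2$ under the constraint $\|\beta^{(\ell)}\|_2^2\le 4R^2/m_\ell$.

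Finally I would establish the weight bound and the existence by a Markov-plus-union-bound argument, which is where the exponent $(1-2\delta)^{-1}$ originates. Since $v_j^{(\ell)}\sim q_\lambda\Qcal_\ell$, one has $\Eb[(w_j^{(\ell)})^2]=\int_{\Tcal_\ell}q_\lambda(\tau)^{-1}q_\lambda(\tau)\,\drm\Qcal_\ell(\tau)=\Qcal_\ell(\Tcal_\ell)=1$, hence $\Eb\!\big[\tfrac{1}{m_\ell}\sum_j (w_j^{(\ell)})^2\big]=1$, and Markov's inequality gives $\Pb\big[\tfrac{1}{m_\ell}\sum_j (w_j^{(\ell)})^2\ge(1-2\delta)^{-1}\big]\le 1-2\delta$. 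The approximation event (probability $\ge 1-\delta$) and the weight event (probability $\ge 2\delta$) therefore co-occur with probability at least $(1-\delta)+2\delta-1=\delta>0$, so a realization of $(v_j^{(\ell)},w_j^{(\ell)})_{j=1}^{m_\ell}$ satisfying both conclusions must exist.

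I expect the main obstacle to be essentially bookkeeping rather than a deep new idea: confirming that the layerwise covariance operator equals $T_\ell$ and that the regularity hypotheses of \cite{Bach17b} (feature integrability and trace-class property of $T_\ell$) genuinely hold in this compositional setting, since once the dictionary between the two frameworks is pinned down, both conclusions fall out of Proposition~1 of \cite{Bach17b} together with the elementary Markov estimate. It is worth emphasizing that this reduction uses only the RKHS structure induced by $\krm_\ell$ and makes no appeal to scale invariance of $\eta$, which is the point that the rest of the paper will exploit.
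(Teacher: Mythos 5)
Your proposal is correct and follows essentially the same route as the paper, whose own ``proof'' is just a deferral to the supplementary material of \cite{Suzuki18}: there, too, the statement is obtained by instantiating Proposition 1 of \cite{Bach17b} with the optimized density $q_\lambda$ and the rescaled features $w_j^{(\ell)}\varphi_{v_j^{(\ell)}}$, with the weight bound $(1-2\delta)^{-1}$ extracted by exactly your Markov-plus-union-bound existence argument (failure probability $\le 1-2\delta$ for the weights, $\le \delta$ for the approximation, hence joint success probability $\ge \delta > 0$). Your added bookkeeping --- identifying the layerwise covariance operator with $T_\ell$ and rescaling from the unit ball to radius $R$ --- is the content the paper leaves implicit, and it is handled correctly.
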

\begin{proof}
The proof can be founded in the supplementary material of \cite{Suzuki18}.
\end{proof}

At first, we will make some assumptions.
We assume that the true function $f^o$ satisfies the following norm condition.
\begin{assumption}\label{assmp:1}
For all $\ell$, $\;h_\ell^o$ and $b_\ell^o$ satisfy the following, respectively:
\[
\|h_\ell^o(\tau,\cdot)\|_{L_2(\Qcal_\ell)} \le R,\quad
|b_{\ell}^o(\tau)|\le R_b\quad(\forall \tau \in \Tcal_\ell).
\]
\end{assumption}
For the activation functions, we {\it do not} assume the scale invariance.
\begin{assumption}\label{assmp:2}
The activation function is $1$-Lipschitz continuous:
\[
\forall x,x^\prime \in \Rb;\;|\eta(x)- \eta(x^\prime)|\le |x-x^\prime|.
\]
\end{assumption}
Note that usual activation functions including the sigmoid function, ReLU, and ELU satisfy this assumption. 

Finally, we assume that the support of the input distribution $P_X$ is compact.
\begin{assumption}\label{assmp:3}
Let $\mathrm{supp}(P_X)$ denote the support of $P_X$, and assume
\[
\exists D_x>0;\;\forall x \in \mathrm{supp}(P_X);\;\|x\|_\infty:=\max_{1\le i \le d_x}|x_i|\le D_x.
\]
\end{assumption}

Based on Proposition~\ref{prop:1}, 
we will consider the finite dimensional approximation model $f^\ast$ of $f^o$．
Let us denote by $m_\ell$ the number of nodes in the $\ell$-th internal layer, and define the following model:
\ba
f_1^\ast(x) &= W^{(1)}x+b^{(1)}, \quad f_\ell^\ast(g)=W^{(\ell)}\eta(g) + b^{(\ell)}\;(g\in \Rb^{m_\ell},\;\ell=2,\dots,L),\\
f^\ast(x) &= f_L^\ast\circ f_{L-1}^\ast\circ \dots \circ f_1^\ast(x),
\ea
where $W^{(\ell)}\in \Rb^{m_{\ell+1}\times m_{\ell}}$ and $b^{(\ell)}\in \Rb^{m_{\ell+1}}$.
Set $c_0=4,\;c_1=4,\;c_\delta = (1-\delta)^{-1}$.
For short, denote $\hat{c}_{\delta}=c_1c_\delta$. 
\begin{theorem}\label{theorem:approx-error} (Approximation error bound without the scale invariance)\\
Let Assumptions \ref{assmp:1} to \ref{assmp:3} hold
For any $\delta\in(0,1)$ and given $\lambda_\ell>0$, suppose that
\[
m_\ell \ge 5N_\ell(\lambda_\ell)\log(32N_\ell(\lambda_\ell)/\delta)\quad(\ell = 2,\dots,L).
\]
Then, there exists $W^{(\ell)}\in \Rb^{m_{\ell+1}\times m_{\ell}}$ and $b^{(\ell)}\in \Rb^{m_{\ell+1}}\;(\ell = 1,\dots,L)$ such that
\banum\label{eq:norm-condition}
\|W^{(\ell)}\|_\infty\le \sqrt{\hat{c}_{\delta}} R,\quad\|b^{(\ell)}\|_{\max}\le R_b\quad(\ell=1,\dots,L)
\eanum
and
\banum\label{eq:approx-error}
\|f^o-f^\ast\|_{L_2(P_X)}\le \sum_{\ell=2}^L 2\sqrt{\hat{c}_{\delta}^{L-\ell}}R^{L-\ell+1}\sqrt{\lambda_\ell},
\eanum
where the matrix norm $\|\cdot\|_\infty$ is defined by
\[
\|A\|_\infty := \max_{i=1,\dots,p}\sum_{j=1}^q |a_{ij}| \;\text{ for }\;A=(a_{ij})_{p\times q}\in \Rb^{p\times q}.
\]
\end{theorem}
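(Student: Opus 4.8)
The plan is to build the finite network $f^\ast$ layer by layer, applying Proposition~\ref{prop:1} at each internal layer and then controlling how the per-layer discretization errors propagate through the remaining layers via the Lipschitz assumption. First I would invoke Proposition~\ref{prop:1} with $\delta$ replaced by $\delta/2$: since $\log(16N_\ell(\lambda_\ell)/(\delta/2))=\log(32N_\ell(\lambda_\ell)/\delta)$, the hypothesis on $m_\ell$ matches the theorem, while the weight bound sharpens to $\frac{1}{m_\ell}\sum_j (w_j^{(\ell)})^2\le (1-\delta)^{-1}=c_\delta$, which is exactly the constant inside $\hat c_\delta=c_1 c_\delta$. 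For the construction, layer~$1$ is affine, so I would set $W^{(1)}_{kj}=h_1^o(v_k^{(2)},j)\Qcal_1(j)$ and $b^{(1)}_k=b_1^o(v_k^{(2)})$, which reproduces $F_1^o(x,v_k^{(2)})$ exactly (hence no discretization error at $\ell=1$, explaining why the error sum starts at $\ell=2$), with $\|W^{(1)}\|_\infty\le R$ by Cauchy--Schwarz against the probability measure $\Qcal_1$. For $\ell=2,\dots,L$ and each output node $k$ (indexing a feature $v_k^{(\ell+1)}\in\Tcal_{\ell+1}$), the function $x\mapsto\int_{\Tcal_\ell}h_\ell^o(v_k^{(\ell+1)},w)\eta(F_{\ell-1}^o(x,w))\,\drm\Qcal_\ell(w)$ lies in $\Hcal_\ell$ with norm $\le R$ by Assumption~\ref{assmp:1} together with the $S_\ell$/RKHS identification; Proposition~\ref{prop:1} then supplies coefficients $\beta^{(\ell)}_{k,\cdot}$ with $\|\beta^{(\ell)}_{k,\cdot}\|_2^2\le 4R^2/m_\ell$ using the \emph{same} sample points $v_j^{(\ell)}$ and weights $w_j^{(\ell)}$ for all $k$. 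I would set $W^{(\ell)}_{kj}=\beta^{(\ell)}_{k,j}w_j^{(\ell)}$ and $b^{(\ell)}_k=b_\ell^o(v_k^{(\ell+1)})$, so that $\|b^{(\ell)}\|_{\max}\le R_b$ is immediate from Assumption~\ref{assmp:1}.

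The key step, and the one I expect to be the main obstacle since it is precisely where scale invariance is replaced, is the bound on $\|W^{(\ell)}\|_\infty$. The naive estimate $\sum_j |\beta^{(\ell)}_{k,j}w_j^{(\ell)}|\le (\max_j|w_j^{(\ell)}|)\sum_j|\beta^{(\ell)}_{k,j}|$ is too lossy, because only the \emph{average} of $(w_j^{(\ell)})^2$ is controlled, not its maximum. Instead I would apply Cauchy--Schwarz jointly to the two sequences, obtaining $\sum_j|\beta^{(\ell)}_{k,j}w_j^{(\ell)}|\le \|\beta^{(\ell)}_{k,\cdot}\|_2\,\|w^{(\ell)}\|_2\le \frac{2R}{\sqrt{m_\ell}}\cdot\sqrt{m_\ell c_\delta}=\sqrt{c_1 c_\delta}\,R=\sqrt{\hat c_\delta}\,R$. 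This recovers the claimed row-sum bound \eqref{eq:norm-condition} \emph{without} folding $w_j^{(\ell)}$ into the argument of $\eta$, the device that scale invariance would otherwise permit; it is this joint Cauchy--Schwarz estimate that turns the loose bound of \cite{Suzuki17} into a tight one.

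For the error estimate I would run a recursion on $E_\ell:=\max_k\|g^{(\ell)}_k-F_{\ell-1}^o(\cdot,v_k^{(\ell)})\|_{L_2(P_X)}$, where $g^{(\ell)}$ denotes the finite network's input to layer $\ell$ and $g^{(\ell)}_k$ is meant to track $F_{\ell-1}^o(\cdot,v_k^{(\ell)})$. Writing $\phi^k_\ell$ for the layer-$\ell$ finite map applied to the \emph{true} previous output $F_{\ell-1}^o(\cdot,v_j^{(\ell)})$, I would split $g^{(\ell+1)}_k-F_\ell^o(\cdot,v_k^{(\ell+1)})$ into a discretization term $\phi^k_\ell-F_\ell^o(\cdot,v_k^{(\ell+1)})$ and a propagation term $g^{(\ell+1)}_k-\phi^k_\ell$, noting that the biases $b_\ell^o(v_k^{(\ell+1)})$ cancel in both. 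Proposition~\ref{prop:1} bounds the first by $2\sqrt{\lambda_\ell}\,R$, while Assumption~\ref{assmp:2} and the row-sum bound give $\|g^{(\ell+1)}_k-\phi^k_\ell\|_{L_2(P_X)}\le \|W^{(\ell)}\|_\infty E_\ell\le \sqrt{\hat c_\delta}\,R\,E_\ell$.

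Combining the two terms yields the linear recursion $E_{\ell+1}\le \sqrt{\hat c_\delta}\,R\,E_\ell+2\sqrt{\lambda_\ell}\,R$ with $E_2=0$. Unrolling it gives $E_{L+1}\le \sum_{\ell=2}^L (\sqrt{\hat c_\delta}\,R)^{L-\ell}\,2\sqrt{\lambda_\ell}\,R=\sum_{\ell=2}^L 2\sqrt{\hat c_\delta^{\,L-\ell}}\,R^{L-\ell+1}\sqrt{\lambda_\ell}$. Since $\Tcal_{L+1}=\{1\}$ and $F_L^o(\cdot,1)=f^o$, while $g^{(L+1)}=f^\ast$, the left-hand side is exactly $\|f^o-f^\ast\|_{L_2(P_X)}$, which establishes \eqref{eq:approx-error}. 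The only genuinely delicate points are the joint Cauchy--Schwarz bound discussed above and the bookkeeping ensuring that the shared features $v_j^{(\ell)}$ and weights $w_j^{(\ell)}$ delivered by Proposition~\ref{prop:1} serve all output nodes simultaneously; the Lipschitz bound replaces any use of homogeneity in the propagation step.
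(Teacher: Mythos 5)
Your proposal is correct and follows essentially the same route as the paper's proof: the same parameter construction $W^{(\ell)}_{kj}=\beta^{(\ell)}_{k,j}w_j^{(\ell)}$, $b^{(\ell)}_k=b_\ell^o(v_k^{(\ell+1)})$ obtained from Proposition~\ref{prop:1} applied with $\delta/2$, the same joint Cauchy--Schwarz bound $\|\beta^{(\ell)}_{k,\cdot}\|_2\|w^{(\ell)}\|_2\le\sqrt{\hat{c}_\delta}\,R$ for the row sums, and the same combination of Assumption~\ref{assmp:2} with the per-node discretization error $2\sqrt{\lambda_\ell}\,R$. The only cosmetic difference is that you accumulate the error via the forward recursion $E_{\ell+1}\le\sqrt{\hat{c}_\delta}\,R\,E_\ell+2\sqrt{\lambda_\ell}\,R$, whereas the paper organizes the identical estimate as a backward telescoping sum over layers; unrolling your recursion reproduces the paper's sum term by term.
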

\begin{proof}
See Appendix~\ref{sec:proof-theorem:approx-error}.
\end{proof}
In the above theorem, due to the non scale invariance, 
the magnitudes of $W^{(\ell)}$ and $b^{\ell}$ are different from \cite{Suzuki18}
whereas the upper bound is the same.
Using the derivation in \cite{Suzuki18},
even under the same condition $(\ref{eq:norm-condition})$,
we obtain only the following upper bound as described in \cite{Suzuki17}:
\[
\|f^\ast - f^o\|_{L_2(P_X)}\le \sum_{\ell = 2}^L2\sqrt{m_{\ell + 1}\hat{c}_\delta^{L-\ell}}R^{L-\ell + 1}\sqrt{\lambda_\ell}.
\]
This looser bound depends on the dimensions $(m_\ell)_{\ell=1}^L$ of the internal layers, which could be very large for small $\lambda_\ell$.
\cite{Suzuki17} mentioned that 
this fact (the loose bound without the scale invariance) supports the practical success of using the ReLU activation functions.
In contrast, from Theorem~\ref{theorem:approx-error}, 
both \cite{Suzuki18} and our result seem to support the practical success of using the deep learning structure.
It is worth noting that
the detail part of the derivation of the upper bound will be important to derive
the tight upper bound of the generalization error (especially for evaluating the covering number).

In the proof of Theorem~\ref{theorem:approx-error},
the big difference from \cite{Suzuki18} is the construction of the candidate of the discretization $f^\ast$ 
based on Proposition~\ref{prop:1}.
In the proof of the corresponding result in \cite{Suzuki18}, 
the parameters of a candidate for the discretization $f^\ast$ are taken as follows:
\ba
W^{(1)}&=\frac{1}{\sqrt{m_2}}\left(\Qcal_1(j)w_{i}^{(2)}h_1^o(v_{i}^{(2)}, j) \right)_{i,j}\in \Rb^{m_2\times d_x},\quad
W^{(L)}=\sqrt{m_L}\beta^{(L)}\in \Rb^{1\times m_L},\\
W^{(\ell)} &= \sqrt{\frac{m_\ell}{m_{\ell+1}}}\left( \beta_{ij}^{(\ell)}w_i^{\ell+1} \right)_{i,j} \in \Rb^{m_{\ell+1}\times m_\ell}
\quad(\ell = 2,\dots,L-1),\\
b^{(1)} &= \frac{1}{\sqrt{m_2}} \left( w_1^{(2)}b_1^o(1),\dots,w_{m_2}^{(2)}b_1^o(m_2) \right)^T \in \Rb^{m_2}, 
\quad b^{(L)} = b_L^o(1)\in \Rb, \quad\text{and}\\
b^{(\ell)}&= \frac{1}{\sqrt{m_{\ell+1}}} \left( w_1^{(\ell+1)}b_\ell^o(v^{(\ell+1)}),\dots, w_{m_{\ell+1}}^{(\ell+1)}b_\ell^o(v_{m_{\ell+1}^{(\ell+1)}}) \right)^T \in \Rb^{m_{\ell+1}}
\quad(\ell = 2,\dots,L-1).
\ea
We can see that the derivation takes the advantage of the scale invariance in an efficient way.
In contrast, in our proof, from Proposition~\ref{prop:1}, the parameters are naturally taken as follows:
\ba
W^{(1)} &=  \left( h_1^o(v_i^{(2)},j)\Qcal_1(j) \right)_{i,j} \in \Rb^{m_2\times d_X},\quad
W^{(L)} = ({\beta^{(L)}}\odot w^{(L)})^T\in \Rb^{1\times m_L},\\
W^{(\ell)} &= \left( \beta_{ij}^{(\ell)}w_i^{(\ell)} \right)_{i,j}\in \Rb^{m_{\ell+1}\times m_\ell}\quad(\ell = 2,\dots,L-1),\\
b^{(1)} &= \left(b_1^o(v_1^{(2)}), \dots ,b_1^o(v_{m_2}^{(2)}) \right)^T \in \Rb^{m_2},\quad b^{(L)}=b_L^o(1)\in \Rb,\quad\text{and}\\
b^{(\ell)}& =\left(b_\ell^{o}(v_{1}^{(\ell+1)}),\dots,b_\ell^{o}(v_{m_{\ell+1}}^{(\ell+1)}) \right)^T \in \Rb^{m_{\ell+1}}\quad(\ell = 2,\dots,L-1),
\ea
where $\odot$ denotes the Hadamard product.

Hereafter, fix $\delta>0$. For simplicity of notation, 
write $\bar{R}:= \sqrt{\hat{c}_{\delta}}R$.
According to Theorem~\ref{theorem:approx-error},
we will consider the following class $\Fcal$ of the finite dimensional functions:
\[
\Fcal=\{f(x) = (W^{(L)}\eta(\cdot) + b^{(L)})\circ\cdots\circ (W^{(1)}x + b^{(1)})\mid
 \|W^{(\ell)}\|_\infty\le \bar{R},\; \|b^{(\ell)}\|_{\max} \le R_b\;(\ell = 1,\dots,L)\}.
\]
Note that this candidate class $\Fcal$ is different from that of \cite{Suzuki18}.
In fact, the following class is considered in \cite{Suzuki18}:
\[
\Fcal^\prime
=
\{f(x) = (W^{(L)}\eta(\cdot) + b^{(L)})\circ\cdots\circ (W^{(1)}x + b^{(1)})\mid
 \|W^{(\ell)}\|_F\le \bar{R},\; \|b^{(\ell)}\|_2 \le c_\delta R_b\;(\ell = 1,\dots,L)\},
\]
where $\|W^{(\ell)}\|_F$ is the Frobenius norm of the matrix $W^{(\ell)}$.

Now, we evaluate the magnitudes of the true function $f^o$ and considered functions $f\in \Fcal$ by the infinity norm.
\begin{lemma}\label{lemma:bound}
Under Assumptions 1 to 3, the $L_\infty$-norms of $f^o$ and $f\in \Fcal$ are bounded as follows:
\ba
\|f^o\|_\infty &\le R^LD_x + \sum_{\ell = 1}^LR^{L-\ell}(R_b+c_\eta),\quad
\|f\|_\infty \le \bar{R}^{L}D_x + \sum_{\ell=1}^{L}\bar{R}^{L-\ell}(c_\eta+R_b),
\ea
where $c_\eta:=\eta(0)$.
\end{lemma}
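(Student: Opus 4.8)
The plan is to bound the supremum norm of each layer's output recursively, reducing both claims to a single scalar linear recursion that I then unroll. For the true function, write $M_\ell := \sup_{x,\tau}|F_\ell^o(x,\tau)|$ for the sup-norm of the output of the $\ell$-th layer (so that $\|f^o\|_\infty = M_L$). I would first treat the affine bottom layer: since $F_1^o(x,\tau)=\int_{\Tcal_1}h_1^o(\tau,w)x(w)\,\drm\Qcal_1(w)+b_1^o(\tau)$, Cauchy--Schwarz in $L_2(\Qcal_1)$ together with $\|x\|_{L_2(\Qcal_1)}\le\|x\|_\infty\le D_x$ (the first inequality because $\Qcal_1$ is a probability measure, the second by Assumption~\ref{assmp:3}) and Assumption~\ref{assmp:1} give $M_1\le R D_x + R_b$.

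For the inductive step ($2\le\ell\le L$), I would apply Cauchy--Schwarz to the integral defining $f_\ell^o$, use $\|h_\ell^o(\tau,\cdot)\|_{L_2(\Qcal_\ell)}\le R$, and control the activated input by
\[
\|\eta(F_{\ell-1}^o(x,\cdot))\|_{L_2(\Qcal_\ell)}\le c_\eta+M_{\ell-1},
\]
which follows from the $1$-Lipschitz bound $|\eta(t)|\le|\eta(0)|+|t|=c_\eta+|t|$ (Assumption~\ref{assmp:2}) and, once more, from $\Qcal_\ell$ being a probability measure. Adding $|b_\ell^o(\tau)|\le R_b$ yields the recursion $M_\ell\le R(c_\eta+M_{\ell-1})+R_b$.

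The bound for $f\in\Fcal$ is obtained by the identical recursion with the integral operator replaced by its finite-dimensional counterpart. Writing $g_\ell$ for the output of the $\ell$-th affine-then-activation block and $N_\ell:=\sup_x\|g_\ell(x)\|_\infty$, the key observation is that $\|\cdot\|_\infty$ on matrices is exactly the operator norm induced by the $\ell_\infty$ vector norm, so $\|W^{(\ell)}v+b^{(\ell)}\|_\infty\le\|W^{(\ell)}\|_\infty\|v\|_\infty+\|b^{(\ell)}\|_{\max}$. With $\|W^{(\ell)}\|_\infty\le\bar R$, $\|b^{(\ell)}\|_{\max}\le R_b$, and the same pointwise Lipschitz bound on $\eta$, this gives $N_1\le\bar R D_x+R_b$ and $N_\ell\le\bar R(c_\eta+N_{\ell-1})+R_b$ for $\ell\ge2$.

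Finally I would unroll the common recursion $M_\ell\le\kappa M_{\ell-1}+(\kappa c_\eta+R_b)$ with base case $M_1\le\kappa D_x+R_b$, obtaining
\[
M_L\le\kappa^L D_x+R_b\sum_{k=0}^{L-1}\kappa^k+c_\eta\sum_{k=1}^{L-1}\kappa^k\le\kappa^L D_x+\sum_{\ell=1}^L\kappa^{L-\ell}(R_b+c_\eta),
\]
which is the claimed estimate for $\kappa=R$ (true function) and $\kappa=\bar R$ (for $f\in\Fcal$); the final inequality merely inserts the missing $k=0$ term, so the closed form is in fact marginally tighter than stated. The whole argument is a routine induction, and the only step needing care is the per-layer estimate, namely converting the $L_2(\Qcal_\ell)$-constraint on $h_\ell^o$ into a pointwise output bound. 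This succeeds without any dimension-dependent factor precisely because $\Qcal_\ell$ is a probability measure (so $\|1\|_{L_2(\Qcal_\ell)}=1$), which is the continuous analogue of measuring $W^{(\ell)}$ in the operator norm $\|\cdot\|_\infty$ rather than, say, the Frobenius norm; this is exactly the distinction between $\Fcal$ and the class $\Fcal^\prime$ of \cite{Suzuki18} and will be what keeps the later covering-number bound tight.
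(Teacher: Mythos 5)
Your proof is correct and follows essentially the same route as the paper's: a per-layer recursion combining Cauchy--Schwarz (resp.\ the $\ell_\infty$-induced operator norm for $f\in\Fcal$) with the pointwise bound $|\eta(t)|\le c_\eta+|t|$ from $1$-Lipschitzness, then unrolling the scalar recursion $M_\ell\le\kappa M_{\ell-1}+\kappa c_\eta+R_b$. The only cosmetic difference is that the paper tracks the $L_2(\Qcal_\ell)$-norm of the previous layer's output while you track the sup-norm, which are interchangeable here precisely because each $\Qcal_\ell$ is a probability measure, as you note.
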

\begin{proof}
See Appendix~\ref{sec:proof-lemma:bound}.
\end{proof}

Whereas we do not assume the scale invariance of the activation functions,
the upper bound in the above lemma is essentially the same as that of \cite{Suzuki18}.
From now on, write
\[
\hat{R}_\infty := \bar{R}^{L}D_x + \sum_{\ell=1}^{L}\bar{R}^{L-\ell}(c_\eta+R_b).
\]

%

%
\section{Convergence rate for the empirical risk minimizer without the scale invariance}\label{sec:4}

Here, we will consider the following empirical risk minimizer:
\[
\hat{f}=\mathop{\arg\min}_{f\in \Fcal} \sum_{i=1}^n\{y_i - f(x_i)\}^2.
\]
Since $\eta$ is continuous and the parameter space corresponding to $\Fcal$ is compact, 
we can ensure the existence of $\hat{f}$.
As with \cite{Suzuki18}, for theoretical simplicity,
we assume that $\hat{f}$ is the exact minimizer whereas we can take $\hat{f}$ as an approximated minimizer.
The flow of the proof of the main theorem described later is the same as that of \cite{Suzuki18}.
The theorem can be proved through the evaluation of the covering number of $\Fcal$ 
and the local Rademacher complexity technique (see, e.g., \cite{Koltchinskii06}).
In our proof of the main theorem, 
the different point from that of \cite{Suzuki18} is the evaluation of the covering number of $\Fcal$.

First, we state some fundamental lemmas for deriving the generalization error of the empirical risk minimizer 
without the scale invariance.
The following lemma is important to evaluate the covering number of $\Fcal$.
\begin{lemma}\label{lemma:difference-bound}
Let Assumptions 1 to 3 hold.
Let $f,f^\prime \in \Fcal$ be two functions with parameters $\{(W^{(\ell)},b^{(\ell)})\}_{\ell=1}^L$ and $\{({W^\prime}^{(\ell)},{b^\prime}^{(\ell)})\}_{\ell=1}^L$, respectively.
Let $\epsilon >0$.
If $\|W^{(\ell)} - {W^\prime}^{(\ell)}\|_\infty<\epsilon$ and $\|b^{(\ell)}-{b^\prime}^{(\ell)}\|_{\max}<\epsilon$ for $\ell = 1,\dots,L$, 
then
\[
\|f-f^\prime\|_\infty
\le
\epsilon 
\left\{
L\bar{R}^{L-1}\left[ D_x +  L(c_\eta+R_b) \right]
+  \sum_{\ell = 1}^{L}\bar{R}^{L-\ell}
\right\}.
\]
\end{lemma}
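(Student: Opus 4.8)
The plan is to propagate the parameter perturbation layer by layer and track how it is amplified through the composition. Write $z_1(x)=W^{(1)}x+b^{(1)}$ and $z_\ell(x)=W^{(\ell)}\eta(z_{\ell-1}(x))+b^{(\ell)}$ for $\ell=2,\dots,L$, so that $f=z_L$, and define $z_\ell^\prime$ analogously from the primed parameters, so that $f^\prime=z_L^\prime$. Setting $\Delta_\ell:=\sup_{x}\|z_\ell(x)-z_\ell^\prime(x)\|_{\max}$, the claim is exactly a bound on $\Delta_L$, since $\|f-f^\prime\|_\infty=\Delta_L$. First I would establish a one-step recursion for $\Delta_\ell$ and then unroll it.

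For the base case the first layer is affine, so each coordinate difference is $\sum_j(W^{(1)}_{ij}-{W^\prime}^{(1)}_{ij})x_j+(b^{(1)}_i-{b^\prime}^{(1)}_i)$; bounding $|x_j|\le D_x$ by Assumption~\ref{assmp:3} and using the row-sum definition of $\|\cdot\|_\infty$ together with $\|\cdot\|_{\max}$ on the bias gives $\Delta_1\le\epsilon(D_x+1)$. For $\ell\ge2$ I would use the add-and-subtract split
\[
z_\ell-z_\ell^\prime=W^{(\ell)}\bigl[\eta(z_{\ell-1})-\eta(z_{\ell-1}^\prime)\bigr]+\bigl(W^{(\ell)}-{W^\prime}^{(\ell)}\bigr)\eta(z_{\ell-1}^\prime)+\bigl(b^{(\ell)}-{b^\prime}^{(\ell)}\bigr).
\]
The first term is controlled by the $1$-Lipschitz property (Assumption~\ref{assmp:2}) together with $\|W^{(\ell)}\|_\infty\le\bar R$, giving $\bar R\,\Delta_{\ell-1}$; the third term contributes $\epsilon$; the middle term needs a uniform magnitude bound on the activations of the primed network, obtained from $|\eta(t)|\le|t|+c_\eta$. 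This yields
\[
\Delta_\ell\le\bar R\,\Delta_{\ell-1}+\epsilon\bigl(M_{\ell-1}+c_\eta+1\bigr),
\]
where $M_k:=\sup_{f\in\Fcal,\,x}\|z_k(x)\|_{\max}$ is the layerwise magnitude bound produced by exactly the recursion already derived in the proof of Lemma~\ref{lemma:bound}, namely $M_k\le\bar R^{k}D_x+(c_\eta+R_b)\sum_{i=0}^{k-1}\bar R^{i}$.

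Unrolling the recursion then gives
\[
\Delta_L\le\bar R^{L-1}\Delta_1+\epsilon\sum_{\ell=2}^{L}\bar R^{L-\ell}\bigl(M_{\ell-1}+c_\eta+1\bigr).
\]
The constant ``$+1$'' contributions, together with the $+1$ hidden inside $\Delta_1$, assemble exactly into the term $\epsilon\sum_{\ell=1}^{L}\bar R^{L-\ell}$ appearing in the claim, while the $D_x$ piece of $\Delta_1$ joins the remaining sum.

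The hard part is the residual double sum $\epsilon\sum_{\ell=2}^{L}\bar R^{L-\ell}(M_{\ell-1}+c_\eta)$, in which the layerwise magnitudes themselves already grow geometrically, so one is nesting two geometric progressions. Here I would substitute the bound for $M_{\ell-1}$ and, in the regime $\bar R\ge1$ where each geometric sum obeys $\sum_{i=0}^{m}\bar R^{i}\le(m+1)\bar R^{m}$, collapse the nested sums. The arithmetic $\sum_{\ell=2}^{L}(\ell-1)=L(L-1)/2$ is what produces the quadratic-in-$L$ factor, and a coarse comparison such as $(L-1)(L+2)/2\le L^2$ shows the whole expression is dominated by $\epsilon L\bar R^{L-1}\bigl[D_x+L(c_\eta+R_b)\bigr]$. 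The only genuine care required is checking that these crude geometric estimates still fit inside the clean $L\bar R^{L-1}[\,\cdot\,]$ form claimed, rather than leaving separate $\bar R^{L-2}$ and geometric-series remainders; the bookkeeping, not any conceptual step, is the obstacle.
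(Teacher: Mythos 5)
Your proposal is correct and takes essentially the same route as the paper's proof: the same layer-by-layer add-and-subtract split (merely mirrored, putting the parameter perturbation against the primed activations rather than the unprimed ones), the $1$-Lipschitz bound with the row-sum norm $\|\cdot\|_\infty$, the layerwise magnitude bounds from the proof of Lemma~\ref{lemma:bound}, and the same unrolling and crude collapsing of the nested geometric sums into the $L\bar{R}^{L-1}[D_x+L(c_\eta+R_b)]$ form. Your explicit flagging of the $\bar{R}\ge 1$ regime in the final collapsing step is not a deviation, since the paper's own last inequality relies on the same implicit assumption.
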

\begin{proof}
See Appendix~\ref{sec:proof-lemma:difference-bound}.
\end{proof}
We will write the term in the bracket of the upper bound $\hat{G}$ for short.
By Lemma~\ref{lemma:difference-bound},
we can evaluate the $\epsilon$-covering number $N(\epsilon, \Fcal, \|\cdot\|_\infty)$ of $\Fcal$ without using the scale invariance.
\begin{proposition}\label{prop:covering-number}
Let Assumptions 1 to 3 hold. Then,
\ba
\log N(\epsilon, \Fcal, \|\cdot\|_\infty)
&\le 
\log\left\{ 1 + \frac{2\hat{G}\max\{\bar{R},R_b\}}{\epsilon} \right\}
\left\{ \sum_{\ell = 1}^L (m_{\ell+1}+1) m_\ell \right\}
\ea
\end{proposition}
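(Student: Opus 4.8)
The plan is to reduce the sup-norm covering of $\Fcal$ to a covering of its finite-dimensional parameter space, using Lemma~\ref{lemma:difference-bound} as the bridge. That lemma shows that the map from parameters $\{(W^{(\ell)},b^{(\ell)})\}_{\ell=1}^L$ to the function $f$ is Lipschitz: the sup-norm distance between two networks is at most $\epsilon\hat{G}$ whenever all weight matrices agree to within $\epsilon$ in $\|\cdot\|_\infty$ and all biases agree to within $\epsilon$ in $\|\cdot\|_{\max}$. Consequently, if I produce an $\epsilon_0$-cover of the parameter set in these norms with $\epsilon_0 := \epsilon/\hat{G}$, its image under the parametrization is an $\epsilon$-cover of $\Fcal$ in $\|\cdot\|_\infty$; this is exactly what forces the factor $2\hat{G}/\epsilon$ to appear in the final bound.

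First I would describe the parameter set as a product, over the $L$ layers, of a matrix-constraint set $\{W^{(\ell)}:\|W^{(\ell)}\|_\infty\le \bar{R}\}$ and a bias box $\{b^{(\ell)}:\|b^{(\ell)}\|_{\max}\le R_b\}$. Because $\|\cdot\|_\infty$ is the maximum over rows of the $\ell_1$-norm, the matrix set factorizes further into a product over its $m_{\ell+1}$ rows, each row ranging over the $\ell_1$-ball of radius $\bar{R}$ in $\Rb^{m_\ell}$; likewise the bias box is the $\ell_\infty$-ball of radius $R_b$ in $\Rb^{m_{\ell+1}}$. For each such ball I invoke the standard volumetric covering bound: an internal $\epsilon_0$-cover of a radius-$r$ ball in a $d$-dimensional normed space has cardinality at most $(1+2r/\epsilon_0)^d$. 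Applying this to every row ($r=\bar{R}$) and to every bias block ($r=R_b$), and bounding both radii by $\max\{\bar{R},R_b\}$, the common per-factor base becomes $1+2\max\{\bar{R},R_b\}/\epsilon_0 = 1+2\hat{G}\max\{\bar{R},R_b\}/\epsilon$.

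Since the covering number of a product in the max metric is the product of the covering numbers of its factors, multiplying these per-factor bounds raises the common base to the total number of scalar parameters. Collecting the weight entries and bias entries over all layers produces the exponent $\sum_{\ell=1}^{L}(m_{\ell+1}+1)m_\ell$, and taking logarithms yields precisely the claimed inequality. Because Lemma~\ref{lemma:difference-bound} already absorbs the hard analytic step, namely propagating a parameter perturbation through the composition of $L$ nonlinear layers, the remaining work here is essentially bookkeeping; I expect the only point requiring genuine care to be the matching of geometries, since the $\|\cdot\|_\infty$ constraint on each $W^{(\ell)}$ must be covered row-wise in $\ell_1$ rather than entrywise, so that the volumetric estimate is applied in exactly the norm under which Lemma~\ref{lemma:difference-bound} controls the function difference.
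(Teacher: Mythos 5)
Your proposal is correct and follows essentially the same route as the paper's own proof: both reduce the sup-norm covering of $\Fcal$ to a parameter-space covering at scale $\epsilon/\hat{G}$ via Lemma~\ref{lemma:difference-bound}, factor each weight-matrix constraint row-wise into $\ell_1$-balls and each bias vector into an $\ell_\infty$-ball, and apply the volumetric bound $(1+2r/\epsilon_0)^d$ to each factor before multiplying over the product structure. The only differences are cosmetic: you cite the volumetric covering estimate as standard while the paper re-derives it following Vershynin, and you even reproduce the paper's own bookkeeping quirk in which the layer-$\ell$ bias block (which lives in $\Rb^{m_{\ell+1}}$) contributes $m_\ell$ rather than $m_{\ell+1}$ coordinates to the final exponent $\sum_{\ell=1}^L(m_{\ell+1}+1)m_\ell$.
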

\begin{proof}
See Appendix~\ref{sec:proof-prop:covering-number}.
\end{proof}
It is worth noting that the upper bound is very similar to that of \cite{Suzuki18} whereas $\Fcal$ is different.
Using the derivation of \cite{Suzuki18}, under no assumption of the scale invariance,
we may get a much looser bound even for the covering number.

Based on these results, without the scale invariance of the activation functions, 
we can derive the tight upper bound of the generalization error of the empirical risk minimizer as follows.
\begin{theorem}\label{theorem:main}
Let Assumptions 1 to 3 hold.
For any $\delta\in(0,1)$ and $\lambda_\ell>0\;(\ell=2,\dots,L)$, suppose that
\banum
m_\ell \ge 5N_\ell(\lambda_\ell)\log(32N_\ell(\lambda_\ell)/\delta)\quad(\ell = 2,\dots,L).\label{eq:constraint}
\eanum
Then, there exists a universal constant $C$ such that, for any $r>0$ and for any $\tilde{r}\in(1,2]$,
with probability at least
\[
1-\exp\left(-\frac{n\hat{\delta}_{1,n}^2(\tilde{r}-1)^2}{11} \right)-2\exp(-r),
\]
we have
\banum
\|\hat{f}-f^o\|_{L_2(P_X)}^2
&\le 
C
\left\{\tilde{r}\hat{\delta}_{1,n}^2+
(\sigma^2+ \hat{R}_\infty^2)\hat{\delta}_{2,n}^2 + \frac{\sigma^2 
+ \hat{R}_\infty^2}{n}\left[ \log_+\left(\frac{\sqrt{n}}{\min\{1,\sigma/\hat{R}_\infty\}}\right)  + r\right]
\right\},\label{eq:tight-bound}
\eanum 
where $\log_+(x)= \max\{1,\log(x)\}$,
\ba
\hat{\delta}_{1,n} &:= \sum_{\ell=2}^L 2\sqrt{\hat{c}_{\delta}^{L-\ell}}R^{L-\ell+1}\sqrt{\lambda_\ell},\quad{and}\\
\hat{\delta}_{2,n} &:=
\sqrt{\frac{\sum_{\ell = 1}^L m_{\ell+1} m_\ell}{n}
\log_+\left(1+\frac{\sqrt{n}\hat{G}\max\{\bar{R},R_b\}}{\min\{\sigma, \hat{R}_\infty\} \sqrt{\sum_{\ell = 1}^L m_{\ell+1} m_\ell}}\right)}.
\ea
\end{theorem}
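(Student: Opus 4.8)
The plan is to feed the three ingredients already assembled into the standard least-squares oracle-inequality machinery used in \cite{Suzuki18}: the approximation bound of Theorem~\ref{theorem:approx-error} supplies the bias term $\hat{\delta}_{1,n}$, the uniform $L_\infty$ envelope of Lemma~\ref{lemma:bound} supplies the boundedness $\hat{R}_\infty$ that controls the variance proxy, and the entropy estimate of Proposition~\ref{prop:covering-number} is integrated to produce the critical radius $\hat{\delta}_{2,n}$. Since the overall flow is identical to \cite{Suzuki18} and only the covering-number input is new, the essential point is to verify that Proposition~\ref{prop:covering-number} reproduces $\hat{\delta}_{2,n}$ with the stated constants inside that argument.

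First I would start from the basic inequality for the empirical risk minimizer. Writing $\|\cdot\|_n$ for the empirical $L_2$ norm and fixing the approximant $f^\ast\in\Fcal$ from Theorem~\ref{theorem:approx-error}, optimality of $\hat{f}$ gives
\[
\|\hat{f}-f^o\|_n^2 \le \|f^\ast-f^o\|_n^2 + \frac{2}{n}\sum_{i=1}^n \xi_i\bigl(\hat{f}(x_i)-f^\ast(x_i)\bigr).
\]
The deterministic term is handled by Theorem~\ref{theorem:approx-error} together with a concentration step converting $\|f^\ast-f^o\|_n^2$ into $\|f^\ast-f^o\|_{L_2(P_X)}^2\le \hat{\delta}_{1,n}^2$, which produces the factor $\tilde{r}$ as the multiplicative slack. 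The stochastic cross term is the crux.

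Next I would control the noise term by a localized Gaussian-complexity bound. Conditionally on $(x_i)_{i=1}^n$, the map $g\mapsto \frac1n\sum_i\xi_i g(x_i)$ is a Gaussian process with variance proxy $\frac{\sigma^2}{n}\|g\|_n^2$, so Dudley's entropy integral applied over the localized set $\{f\in\Fcal:\|f-f^o\|_n\le\delta\}$, combined with $N(\epsilon,\Fcal,\|\cdot\|_n)\le N(\epsilon,\Fcal,\|\cdot\|_\infty)$ and Proposition~\ref{prop:covering-number}, bounds it by a quantity of order
\[
\frac{\sigma}{\sqrt n}\int_0^\delta \sqrt{\log N(\epsilon,\Fcal,\|\cdot\|_\infty)}\,\drm\epsilon,
\]
which, after evaluating the integral, is of order $\sigma\,\delta\,\hat{\delta}_{2,n}$, the product that drives the fixed-point balance against the leading $\delta^2=\|\hat{f}-f^o\|^2$.

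The main obstacle, and the technically delicate part, will be the localization/peeling step that turns this $\delta$-dependent noise bound into a self-bounding inequality for the radius $\delta=\|\hat{f}-f^o\|$ while keeping the empirical and population norms interchangeable. Concretely I would (i) apply a uniform Bernstein- or Talagrand-type concentration over the bounded class $\Fcal$ (envelope $\hat{R}_\infty$) to pass from $\|\cdot\|_n$ back to $\|\cdot\|_{L_2(P_X)}$, which is where the $(\sigma^2+\hat{R}_\infty^2)$ prefactor enters; (ii) peel over dyadic shells $2^{j}\delta\le\|\hat{f}-f^o\|<2^{j+1}\delta$ so that the data-dependent $\hat{f}$ is absorbed into a supremum over each shell; and (iii) solve the resulting inequality
\[
\|\hat{f}-f^o\|_{L_2(P_X)}^2 \le C\Bigl(\tilde{r}\,\hat{\delta}_{1,n}^2 + (\sigma^2+\hat{R}_\infty^2)\hat{\delta}_{2,n}^2 + \text{(tail)}\Bigr).
\]
The explicit tail term, carrying $\log_+\!\bigl(\sqrt n/\min\{1,\sigma/\hat{R}_\infty\}\bigr)+r$, arises from the high-probability rather than in-expectation versions of the Gaussian deviation and the Talagrand bound; each contributes an $e^{-r}$ failure probability, while the $\min\{1,\sigma/\hat{R}_\infty\}$ factor tracks the comparison between the noise scale $\sigma$ and the envelope $\hat{R}_\infty$ in the union bound over the localization levels.
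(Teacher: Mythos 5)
Your plan follows essentially the same route as the paper's proof: the same basic inequality for the ERM centered at $f^\ast$, Dudley's entropy integral with Proposition~\ref{prop:covering-number} for the localized Gaussian cross term, dyadic peeling, a Bousquet--Talagrand concentration step (with envelope $\hat{R}_\infty$) to pass between $\|\cdot\|_n$ and $\|\cdot\|_{L_2(P_X)}$, a Bernstein bound on $\|f^\ast-f^o\|_n^2$ producing the $\tilde{r}$ slack, and the union bound over peeling levels producing the $\log_+$ factor. The only cosmetic deviation is that you localize around $f^o$ where the paper localizes around $f^\ast\in\Fcal$ (which is what makes the covering-number reduction immediate), but this does not change the argument.
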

\begin{proof}
From Theorem~\ref{theorem:approx-error} and Proposition~\ref{prop:covering-number}, under no assumption of the scale invariance,
we can derive the tight upper bound of the generalization error in the same way as \cite{Suzuki18}.
For the sake of completeness and self-containedness, 
we provide the proof in Appendix~\ref{sec:proof-theorem:main}.
\end{proof}
This generalization error bound is essentially the same as that of \cite{Suzuki18} 
although we do not assume the scale invariance of the activation functions.
Since the third term in the bracket of the upper bound is smaller than the first two terms, 
the generalization error can be simply represented by
\[
\|\hat{f} - f^o\|_{L_2(P_X)}^2 = O_p(\hat{\delta}_{1,n}^2 + \hat{\delta}_{2,n}^2).
\]
Intuitively, 
the term $\hat{\delta}_{1,n} $ can be considered as the bias term 
which is induced in approximation of $f^o$ by the finite dimensional model $\Fcal$.
Moreover, the term $\hat{\delta}_{2,n}$ represents the variance term, that is,
the deviation of the estimator in the finite dimensional model $\Fcal$.
According to Theorem~\ref{theorem:approx-error}, 
large widths $m_\ell$ fo the internal layers are required to obtain a small value of $\hat{\delta}_{1,n}$.
However, large widths $m_\ell$ lead to the increase of the variance term $\hat{\delta}_{2,n}$.
Thus, we can see the bias-variance trade-off in the generalization error bound.

In the existing approximation error analysis in \cite{Suzuki17, Suzuki18} 
with the tight evaluation of the covering number of $\Fcal$ in Proposition~\ref{prop:covering-number},
under no assumption of the scale invariance, 
we obtain the following looser bound:
\banum
\|\hat{f} - f^o\|_{L_2(P_X)}^2 = O_p(\hat{\Delta}_{1,n}^2 + \hat{\delta}_{2,n}^2),\label{eq:looser-bound}
\eanum
where
\[
\hat{\Delta}_{1,n}:=\sum_{\ell = 2}^L2\sqrt{m_{\ell + 1}\hat{c}_\delta^{L-\ell}}R^{L-\ell + 1}\sqrt{\lambda_\ell}.
\]
Through the example of the generalization error bound, 
we will see the essential difference between the looser bound $(\ref{eq:looser-bound})$ and our tight bound $(\ref{eq:tight-bound})$ in Theorem~\ref{theorem:main}.
Suppose that $\sigma,\;\hat{R}_\infty$, and $\bar{R}^L$ are of constant order.
Then, we can rewritten the two bounds $(\ref{eq:looser-bound})$ and $(\ref{eq:tight-bound})$ as
\banum
\|\hat{f} - f^o\|_{L_2(P_X)}^2 &= O_p\left(L\sum_{\ell=2}^L \bar{R}^{L-\ell+1}\lambda_\ell m_{\ell + 1}
+ \sum_{\ell=1}^L\frac{m_{\ell}m_{\ell+1}}{n}\log(n)\right)\label{eq:looser-order}
\eanum
and
\banum
\|\hat{f} - f^o\|_{L_2(P_X)}^2 &= O_p\left(L\sum_{\ell=2}^L \bar{R}^{L-\ell+1}\lambda_\ell 
+ \sum_{\ell=1}^L\frac{m_{\ell}m_{\ell+1}}{n}\log(n)\right),\label{eq:tight-order}
\eanum
respectively.
As mentioned in \cite{Suzuki18},
by balancing this bias-variance trade-off, 
we can determine the optimal widths of the internal layers.
Here, we will ignore the $\log(n)$-factor and $L$ for simplicity.
For the looser bound $(\ref{eq:looser-order})$, in order to balance the bias and variance terms, 
we set $\lambda_\ell$ as follows:
\[
\sum_{\ell=2}^L\lambda_\ell m_{\ell + 1} = \sum_{\ell=1}^L\frac{m_{\ell}m_{\ell+1}}{n}\]
Thus, we may set $m_\ell$ as follows:
\[
\lambda_\ell = \frac{m_\ell}{n}\;\text{ for }\; \ell=2,\dots,L.
\]
By contrast, 
for the tight bound $(\ref{eq:tight-order})$,
we set $\lambda_\ell$ as follows:
\[
\sum_{\ell=2}^L\lambda_\ell = \sum_{\ell=1}^L\frac{m_{\ell}m_{\ell+1}}{n}\le \sum_{\ell=1}^{L+1}\frac{m_{\ell}^2}{n},
\]
and thus we may set $m_\ell$ as follows:
\[
\lambda_\ell = \frac{m_\ell^2}{n}\;\text{ for }\; \ell=2,\dots,L.
\]
Combining these with the constraint $m_\ell \gtrsim N_{\ell}(\lambda_\ell)\log(N_{\ell}(\lambda_\ell))$ of $(\ref{eq:constraint})$,
the optimal widths $m_\ell$ in the internal layers, which minimizes the upper bound of the generalization error, 
can be determined.
Note that the optimal choice of $(m_\ell)_{\ell=2}^{L}$ based on $(\ref{eq:looser-bound})$ are different from
that based on our tight bound $(\ref{eq:tight-bound})$.

Now, we compare the looser and our new tight bounds 
under the corresponding best choices of $(m_\ell)_{\ell=2}^{L}$, respectively.
Here, we consider the setting in which 
the eigenvalue $\mu_j^{(\ell)}$ of $T_\ell$ decreases polynomially in $j$, that is, 
there exists constants $a_\ell>0$ and $s_\ell \in (0,1)$ such that
\banum
\mu_{j}^{(\ell)} \le a_\ell j^{-1/s_\ell}\quad(j\ge 1).\label{eq:eigen}
\eanum
This setting is commonly used in the analysis of kernel methods such as the support vector machine
(see, e.g., Section 7.7 in \cite{SteinwartChristmann08}).
Through the discussion in Section 4.4.2 in \cite{Suzuki17}, 
the degree of freedom $N_\ell(\lambda_\ell)$ can be evaluated as 
\[
N_{\ell}(\lambda_\ell) \lesssim(\lambda_\ell/a_\ell)^{-s_\ell}.
\]
In the looser bound $(\ref{eq:looser-order})$,
the optimal choice
\[
\lambda_\ell = a_{\ell}^{\frac{s_\ell}{1+s_\ell}}n^{-\frac{1}{1+s_\ell}}
\]
gives the looser generalization error bound:
\banum
\|\hat{f} - f^o\|_{L_2(P_X)}^2 \lesssim 
L\sum_{\ell=2}^L(\bar{R}\vee 1)^{2(L-\ell+1)}
a_{\ell}^{\frac{s_\ell}{1+s_\ell}}a_{\ell+1}^{\frac{s_{\ell+1}}{1+s_{\ell+1}}}
n^{\frac{s_\ell}{1+s_\ell}+\frac{s_{\ell+1}}{1+s_{\ell+1}}-1}
\log(n) + \frac{d_x^2}{n}\log(n),\label{eq:optimal-loose-bound}
\eanum
where the factors depending on $s_\ell, \log(\bar{R}R_b\hat{G}),\;\sigma^2$, and $\hat{R}_\infty$ are ignored.
By contrast, for our tight bound $(\ref{eq:tight-order})$, 
the optimal choice
\[
\lambda_\ell = a_{\ell}^{\frac{2s_\ell}{1+2s_\ell}}n^{-\frac{1}{1+2s_\ell}}
\]
provides
\banum
\|\hat{f} - f^o\|_{L_2(P_X)}^2 \lesssim 
L\sum_{\ell=2}^L(\bar{R}\vee 1)^{2(L-\ell+1)}
a_{\ell}^{\frac{2s_\ell}{1+2s_\ell}}n^{-\frac{1}{1+2s_\ell}}
\log(n) + \frac{d_x^2}{n}\log(n),\label{eq:optimal-tight-bound}
\eanum
which is the same bound in \cite{Suzuki18}.
To see the clear difference between $(\ref{eq:optimal-loose-bound})$ and $(\ref{eq:optimal-tight-bound})$, 
we simply assume that $s=s_2 = \cdots = s_{L}$.
Then, the looser and our tight generalization error are represented by
\[
\|\hat{f} - f^o\|_{L_2(P_X)}^2 = O_p(n^{-\frac{1-s}{1+s}})
\;\text{ and }\;
\|\hat{f} - f^o\|_{L_2(P_X)}^2 = O_p(n^{-\frac{1}{1+2s}}),
\]
respectively. 
If $s=1/3$, then
the looser generalization error bound $(\ref{eq:looser-bound})$ gives
\[
\|\hat{f} - f^o\|_{L_2(P_X)}^2 = O_p(n^{-1/2}).
\]
By contrast, in the same setting, our tight bound derived without scale invariance gives
\[
\|\hat{f} - f^o\|_{L_2(P_X)}^2 = O_p(n^{-3/5}).
\]
Moreover, if $s=1/2$, the looser bound $(\ref{eq:looser-bound})$ and our tight bound $(\ref{eq:tight-bound})$ lead a cubic-root rate and a square-root rate, 
respectively.
%

%
\section{Conclusion}

In this paper, using the framework developed in \cite{Suzuki18},
without the scale invariance of the activation functions, 
we derive the fast generalization error bound of deep learning.
This bound is essentially the same as that of \cite{Suzuki18}
although we {\it do not} assume the scale invariance of the activation functions.
Whereas we only focus on the empirical risk minimizer in this paper, 
we can also derive the tight generalization error bound of the Bayes estimator, under no assumption of the scale invariance,
by using the same derivation of \cite{Suzuki18} combining with our results (Theorem~\ref{theorem:approx-error} and Proposition~\ref{prop:covering-number}).
From the looser approximation error bound derived without the scale invariance in \cite{Suzuki17},
there is a possibility that 
a deep learning with the non scale invariant activations may have a slower convergence rate of $O(1/\sqrt{n})$
even when one with the scale invariant activations can reach a convergence rate faster than $O(1/\sqrt{n})$.
However, our tight analysis without using the scale invariance denies this possibility.
Hence, at least in the theoretical framework of \cite{Suzuki18}, 
we may conclude that the scale invariance of the activation functions is not essential to get the fast rate of convergence, 
and also that the theoretical framework proposed by \cite{Suzuki18} can be widely applied 
for analysis of deep learning with general activation functions.
In recent years, the non scale invariant activation functions including ELU (\citealp{ClevertEtAl16}) are proposed, 
and such activation functions empirically provide higher or comparable performance compared to ReLU.
Our analysis can be applied for the deep neural network with these activations.
Therefore, we believe our results contribute to the theoretical understanding of deep learning.
%


\bibliography{example_paper}

\appendix

\vspace{10pt}
Here, we provide the proofs of the results described above.
The overall flow of the proofs is the same as that of \cite{Suzuki18}.
On the other hand, since we do not assume the scale invariance of the activation functions,
the detail parts of the proofs of Theorem~\ref{theorem:approx-error}, Lemma~\ref{lemma:bound}, 
Lemma~\ref{lemma:difference-bound}, Proposition~\ref{prop:covering-number} are different.
From these results, the approach to derive the tight upper bound of the generalization error is the same as that of \cite{Suzuki18}.
For the sake of completeness and self-containedness, 
we provide the detailed proof of Theorem~\ref{theorem:main}.

%
\section{Proof of Theorem~\ref{theorem:approx-error}}\label{sec:proof-theorem:approx-error}

As with \cite{Suzuki18}, 
we construct the finite dimensional neural network approximating the true function $f^o$ recursively.
We follow the notation of \cite{Suzuki18}.
Here, $(v_j^{(\ell)})_{j=1}^{m_\ell}$ and $(w_j^{(\ell)})_{j=1}^{m_\ell}$ denote
the sequences in Proposition~\ref{prop:1}, and let $\hat{\Tcal}_\ell = \{v_{j}^{(\ell)}\}_{j=1}^{m_\ell}$.
By abuse of notation, we use the following notation:
\begin{itemize}
\item We use the same symbol $f_{\ell}^\ast$ for $f_{\ell}^\ast:\hat{\Tcal}_\ell \rightarrow \hat{\Tcal}_{\ell+1}$ and $f_\ell^\ast:\Rb^{m_{\ell}}\rightarrow \Rb^{m_\ell}$.
\item For function $F:\Rb^{d_x}\times \hat{\Tcal}_\ell \rightarrow \Rb$, we will denote by $f_{\ell}^\ast[F](x,v_i^{(\ell+1)})$ the following function.
\[
f_\ell^\ast[F(x,\cdot)](v_{i}^{(\ell+1)})=\sum_{j=1}^{m_\ell}W_{ij}^{\ell}F(x,v_{j}^{(\ell)})+b_i^{(\ell)} \text{ for }
v_{i}^{(\ell+1)}\in \hat{\Tcal}_{\ell+1}.
\]
\item When we denote $f_\ell^\ast[F]$ for $F:\Rb^{d_x}\times \Tcal_\ell\rightarrow \Rb\;((x,v)\mapsto F(x,v))$, 
$F$ will be regarded as its restriction on $\Rb^{d_x}\times \hat{\Tcal}_\ell$.
\item For $v \in \hat{\Tcal}_{\ell+1}$ and $x\in \Rb^{d_x}$, we define the output from the $\ell$-th layer of the approximator $f^\ast$ as $F_\ell^\ast(x,v)$. That is, the output is recursively given by $F_\ell^\ast(x,v)=f_\ell^\ast[F_{\ell-1}^\ast](x,v)$.
\item Similarly, we will an analogous notation for the true model $f_\ell^o$.
Write $F_\ell^o(x,v)=(f_\ell^o\circ\dots \circ f_1^o(x))(v)$ for $v\in \Tcal_{\ell+1}$ and $x\in \Rb^{d_x}$, and 
$F_\ell^o(x,v)=f_\ell^o[F_{(\ell-1)}^o](x,v)$.
\end{itemize}

\noindent{\bf Step 1} (the last layer, $\ell=L$):\;
First, we consider the approximation of the $L$-th layer.
Note that the output of the $L$-th layer is a single value.
Let $\Tcal_{L+1}=\{1\}$.
As the candidate of the approximation of the true $L$-th layer, 
we define the following approximator:
\banum\label{eq:S1}
\tilde{f}_L^\ast[F_{L-1}](x,1) 
= \sum_{j=1}^{m_L} \beta_{j}^{(L)}w_j^{(L)}\eta\left(F_{L-1}(x,v_j^{(L)})\right) + b_L.
\eanum
Here, according to Proposition~\ref{prop:1}, 
$\beta^{(L)}\in \Rb^{m_L}$ and $w^{(L)}\in \Rb^{m_L}$ satisfy $\|\beta^{(L)}\|_2^2\le c_1R^2/m_L$ and $\|w^{(L)}\|_2^2\le m_Lc_\delta$, 
respectively.
We set 
\[
W^{(L)}=W_{1,:}^{(L)}=({\beta^{(L)}}\odot w^{(L)})^T,\;b^{(L)}=b_L^o(1).
\]
Then, the model (\ref{eq:S1}) can be rewritten by
\[
\tilde{f}_L^\ast[F_{L-1}](x,1) =
\sum_{j=1}^{m_L} W_{1,j}^{(L)}\eta\left(F_{L-1}(x,v_j^{(L)})\right) + b_1^{(L)}.
\]
Note that the norms of $W^{(L)}$ and $b^{(L)}$ are bounded by
\banum
\|W_{1,:}^{(L)}\|_1 = \sum_{j=1}^{m_L}| \beta_{j}^{(L)}w_j^{(L)}|
\le \|\beta^{(L)}\|_2 \|w^{(L)}\|_2 \le \sqrt{c_1c_\delta}R,\quad
\|b^{(L)}\|_2 = |b_L| \le R_b,\label{eq:S2}
\eanum
respectively.
Thus, by Cauchy-Schwarz inequality and Assumption~\ref{assmp:1}, 
for any functions $F_{L-1}, F_{L-1}^\prime$ from $\hat{\Tcal}_L \times \Rb^{d_x}$ to $\Rb$,
\ba
&\int \left|\tilde{f}_L^\ast[F_{L-1}](x,1) -\tilde{f}_L^\ast[F_{L-1}^\prime](x,1)  \right|^2 P_X(dx)\\
&= 
\int \left| 
\sum_{j=1}^{m_L} \beta_{j}^{(L)}w_j^{(L)}\left\{
\eta\left(F_{L-1}(x,v_j^{(L)})\right) 
- \eta\left(F_{L-1}^\prime(x,v_j^{(L)})\right) 
\right\}
\right|^2 P_X(dx)\\
&\le
\left( \sum_{j=1}^{m_L}{\beta_{j}^{(L)}}^2\right)
 \left[ 
\sum_{j=1}^{m_L}{w_j^{(L)}}^2 \int \left\{ 
\eta\left(F_{L-1}(x,v_j^{(L)})\right) 
- \eta\left(F_{L-1}^\prime(x,v_j^{(L)})\right) 
\right\}^2P_X(dx)
\right]\\
&\le
\|\beta^{(L)}\|_2^2\|w^{(L)}\|_2^2
\left\|\left\{
\int\left|
F_{L-1}(x,v_j^{(L)}) - F_{L-1}^\prime(x,v_j^{(L)})
\right|^2P_X(dx)
\right\}_{j=1}^{m_L}
\right\|_{\max}\\
&\le 
c_1c_\delta R^2
\left\|\left\{
\int\left|
F_{L-1}(x,v_j^{(L)}) - F_{L-1}^\prime(x,v_j^{(L)})
\right|^2P_X(dx)
\right\}_{j=1}^{m_L}
\right\|_{\max}.
\ea
According to Proposition~\ref{prop:1}, we can take $\beta^{(L)}$ and $w^{(L)}$ such that
\[
\|\tilde{f}_L^\ast[F_{L-1}^o](\cdot,1) - f_L^o[F_{L-1}^o](\cdot,1)\|_{L_2(P_X)}^2 \le c_1\lambda R^2.
\]
Hereafter, we fix $\beta^{(L)}$ and $w^{(L)}$ satisfying this inequality and the bound (\ref{eq:S2}).

\noindent{\bf Step 2} (the internal layers, $\ell=2,\dots,L-1$):\;
Next, for the $\ell$-th internal layer, we will consider the following approximator:
\[
\tilde{f}_\ell^{\ast}[g](v_{i}^{(\ell+1)}) 
= \sum_{j=1}^{m_\ell}\beta_{i,j}^{(\ell)}w_{j}^{(\ell)}\eta(g(v_j^{(\ell)})) + b_\ell^{o}(v_{i}^{(\ell+1)}),
\]
where $g$ is a function from $\hat{\Tcal}$ to $\mathbb{R}$, 
$\beta^{\ell} \in \mathbb{R}^{m_{\ell+1}\times m_\ell}$ and $w^{(\ell)}\in \Rb^{m_\ell}$ 
satisfy $\|\beta_{j,:}^{\ell}\|_2^2 \le c_1R^2/m_\ell$ and $\|w^{(\ell)}\|_2^2\le m_{\ell}c_\delta$, respectively.
We set
\ba
W_{ij}^{(\ell)}=\beta_{ij}^{(\ell)}w_i^{(\ell)},\quad b^{(\ell)} =\left(b_\ell^{o}(v_{1}^{(\ell+1)}),\dots,b_\ell^{o}(v_{m_{\ell+1}}^{(\ell+1)}) \right)^T.
\ea
Then, we have
\[
\|W_{i,:}^{(\ell)}\|_1=\|\beta_{i,:}^{(\ell)}\odot w^{(\ell)}\|_1\le \|\beta_{i,:}^{(\ell)}\|_2\|w^{(\ell)}\|_2\le \sqrt{c_1c_\delta}R,\quad
\|b^{(L)}\|_{\max}\le R_b.
\]
For any $v_i^{(\ell+1)}\in \hat{\Tcal}_{(\ell+1)}$, we obtain
\ba
&\int \left|\tilde{f}_\ell^\ast[F_{\ell-1}](x,v_{i}^{(\ell+1)}) -\tilde{f}_\ell^\ast[F_{\ell-1}^\prime](x,v_{i}^{(\ell+1)}) \right|^2 P_X(dx)\\
&=
\int \left|
\sum_{j=1}^{m_\ell}\beta_{i,j}^{(\ell)}w_{j}^{(\ell)}
\left\{ \eta(F_{\ell-1}(x,v_j^{(\ell)})) 
-
\eta(F_{\ell-1}^\prime(x,v_j^{(\ell)})) \right\}
\right|^2 P_X(dx)\\
&\le 
\left(\sum_{j=1}^{m_\ell}{\beta_{i,j}^{(\ell)}}^2\right)
\left[
\sum_{j=1}^{m_\ell} {w_{j}^{(\ell)}}^2 \int \left| \eta(F_{\ell-1}(x,v_j^{(\ell)})) -\eta(F_{\ell-1}^\prime(x,v_j^{(\ell)}))\right|^2P_X(dx)
\right]\\
&\le
\|\beta_{i,:}^{(\ell)}\|_2^2\|w^{(\ell)}\|_2^2
\left\| \left\{ \int \left|
	F_{\ell-1}(x,v_j^{(\ell)}) -F_{\ell-1}^\prime(x,v_j^{(\ell)})
\right|^2P_X(dx) \right\}_{j=1}^{m_\ell} \right\|_{\max}\\
&\le
c_1c_\delta R^2
\left\| \left\{ 
\int \left| F_{\ell-1}(x,v_j^{(\ell)}) -F_{\ell-1}^\prime(x,v_j^{(\ell)}) \right|^2P_X(dx) 
\right\}_{j=1}^{m_\ell} 
\right\|_{\max}.
\ea
According to Proposition~\ref{prop:1}, we can choose $\beta^{(\ell)}$ and $w^{(\ell)}$ satisfying
\[
\max_{j=1,\dots,m_{\ell+1}}
\left\| 
\tilde{f}_\ell^\ast[F_{\ell-1}^o](\cdot,v_j^{(\ell+1)}) - f_\ell^o[F_{\ell-1}^o](\cdot,v_j^{(\ell+1)})
\right\|_{L_2(P_X)}^2 \le c_0\lambda_\ell R^2.
\]

\noindent{\bf Step 3} (the first layer, $\ell=1$):\;
In the first layer, for $v_i^{(2)}\in \hat{\Tcal}_2$, we set
\[
\tilde{f}_1^\ast(x,v_i^{(2)}) = \sum_{j=1}^{d_x} h_1^o(v_i^{(2)},j)\Qcal_1(j)x_j + b_1^o(v_i^{(2)}).
\]
From the definition of $f^o$, we have $\tilde{f}_1^\ast(x,v_i^{(2)})=f^o(x,v_i^{(2)})$.
Let 
\[
W^{(1)} =  \left( h_1^o(v_i^{(2)},j)\Qcal_1(j) \right)_{m_2\times d_x} \in \Rb^{m_2\times d_X},\quad
b^{(1)} = \left(b_1^o(v_1^{(2)}), \dots ,b_1^o(v_{m_2}^{(2)}) \right)^T.
\]
Then, by H\"{o}lder's inequality, 
\[
\|W^{(1)}\|_\infty
=
\max_{i=1,\dots,m_2} \sum_{j=1}^{d_x} |h_1^o(v_i^{(2)},j)|\Qcal_1(j)
\le 
\max_{i=1,\dots,m_2} \sqrt{\sum_{j=1}^{d_x} |h_1^o(v_i^{(2)},j)|^2\Qcal_1(j)}
\le R.
\]

\noindent{\bf Step 4}:\;
Finally, we combine the above results. 
The above inequalities derived without the scale invariance 
are the same as that of \cite{Suzuki18} except for the magnitudes of $W^{(\ell)}$ and $b^{(\ell)}$.
Hence, this part is the same as the corresponding part in \cite{Suzuki18}, in which the scale invariance is not required.
For the sake of completeness, we provide the part of the proof in \cite{Suzuki18}.
By the subadditivity of the norm,
\ba
&\left\|
f_L^o \circ f_{L-1}^o \circ \dots \circ  f_{1}^o
-
\tilde{f}_{L}^\ast \circ \tilde{f}_{L-1}^\ast \circ \dots \circ  \tilde{f}_{1}^\ast
\right\|_{L_2(P_X)}\\
&=
\Bigl\|
(f_L^o \circ f_{L-1}^o \circ \dots \circ  f_{1}^o) - (\tilde{f}_{L}^\ast \circ f_{L-1}^o \circ \dots \circ  f_{1}^o)\\
&\qquad\;\;\vdots\\
&\qquad + 
(\tilde{f}_{L}^\ast \circ \dots \circ \tilde{f}_{\ell + 1}^\ast \circ f_{\ell}^o \circ f_{\ell-1}^o \circ \dots \circ  f_{1}^o)
-
(\tilde{f}_{L}^\ast \circ \dots \circ \tilde{f}_{\ell + 1}^\ast \circ \tilde{f}_{\ell}^\ast \circ f_{\ell-1}^o \circ \dots \circ  f_{1}^o)\\
&\qquad\;\;\vdots\\
&\qquad+
(\tilde{f}_{L}^\ast \circ \dots \circ \tilde{f}_{2}^\ast \circ f_{\ell}^o)
-
(\tilde{f}_{L}^\ast \circ \dots \circ \tilde{f}_{2}^\ast \circ \tilde{f}_{1}^\ast)\Bigr\|_{L_2(P_X)}\\
&\le 
\sum_{\ell=1}^L
\left\|
(\tilde{f}_{L}^\ast \circ \dots \circ \tilde{f}_{\ell + 1}^\ast \circ f_{\ell}^o \circ f_{\ell-1}^o \circ \dots \circ  f_{1}^o)
-
(\tilde{f}_{L}^\ast \circ \dots \circ \tilde{f}_{\ell + 1}^\ast \circ \tilde{f}_{\ell}^\ast \circ f_{\ell-1}^o \circ \dots \circ  f_{1}^o)
\right\|_{L_2(P_X)}.
\ea
Combining these, we obtain
\ba
&\left\|
(\tilde{f}_{L}^\ast \circ \dots \circ \tilde{f}_{\ell + 1}^\ast \circ f_{\ell}^o \circ f_{\ell-1}^o \circ \dots \circ  f_{1}^o)
-
(\tilde{f}_{L}^\ast \circ \dots \circ \tilde{f}_{\ell + 1}^\ast \circ \tilde{f}_{\ell}^\ast \circ f_{\ell-1}^o \circ \dots \circ  f_{1}^o)
\right\|_{L_2(P_X)}\\
&\le (\sqrt{c_1c_\delta}R)^{L-\ell}\sqrt{c_0\lambda_\ell}R
\le R^{L-\ell+1}\sqrt{(c_1c_\delta)^{L-\ell} c_0}\sqrt{\lambda_\ell}.
\ea
Therefore, we conclude that
\ba
\|f^o-f^\ast\|_{L_2(P_X)}\le \sum_{\ell=2}^L R^{L-\ell+1}\sqrt{(c_1c_\delta)^{L-\ell} c_0}\sqrt{\lambda_\ell},
\ea
and the proof is complete.
%

%
\section{Proof of Lemma~\ref{lemma:bound}}\label{sec:proof-lemma:bound}

First, we will derive the upper bound of $\|f^o\|_\infty$.
Suppose that $\|F_{\ell-1}^o(x,\cdot)\|_{L_2(\Qcal_\ell)}\le G$.
Then, for any $\tau \in \Tcal_{\ell+1}$, we have
\ba
|F_\ell^o(x,\tau)|
&=
\left| 
\int_{\Tcal_\ell} h_\ell^o(\tau, w) \eta(F_{\ell-1}^o(x,w))\,\drm \Qcal_\ell(w) + b_\ell^o(\tau)
\right|\\
&\le \|h_\ell^o(\tau,\cdot)\|_{L_2(\Qcal_\ell)}\|\eta(F_{\ell-1}^o(\tau,\cdot))\|_{L_2(\Qcal_\ell)} + |b_\ell^o(\tau)|\\
&\le \|h_\ell^o(\tau,\cdot)\|_{L_2(\Qcal_\ell)}\||\eta(F_{\ell-1}^o(\tau,\cdot)) - \eta(0)| + |\eta(0)|\|_{L_2(\Qcal_\ell)} + |b_\ell^o(\tau)|\\
&\le \|h_\ell^o(\tau,\cdot)\|_{L_2(\Qcal_\ell)}\left\{\|F_{\ell-1}^o(\tau,\cdot)\|_{L_2(\Qcal_\ell)} + c_\eta\right\} + |b_\ell^o(\tau)|\\
&\le R(G+c_\eta) + R_b.
\ea
By H\"{o}lder's inequality, for any $\tau\in \Tcal_2$ and any $x\in \Rb^{d_x}$,
\ba
|f_1^o(x,\tau)| 
&= \left| \sum_{i=1}^{d_x} h_1^o(\tau,i)x_i\Qcal_1(i)  + b_1^o(\tau)\right|
\le \sum_{i=1}^{d_x} |h_1^o(\tau,i)x_i\Qcal_1(i)|  + |b_1^o(\tau)|\\
&\le \|x\|_{\max} \sum_{i=1}^{d_x} |h_1^o(\tau,i)|\Qcal_1(i)  + |b_1^o(\tau)|
\le \|x\|_{\max}\|h_1^o(\tau,\cdot)\|_{L_2(\Qcal_1)}  + |b_1^o(\tau)|\\
&\le RD_x + R_b.
\ea
Combining these, we obtain
\ba
\|f^o\|_\infty
\le 
R^L D_x + \sum_{\ell = 1}^LR^{L-\ell} (R_b+c_\eta).
\ea

We next prove the upper bound of $\|f\|_\infty$.
Denote $a^{(\ell)}(x):=(a_1^{(\ell)}(x),\dots,a_{m_{\ell+1}}^{(\ell)}(x))^T:=(W^{(\ell)}\eta(\cdot) + b^{(\ell)}) \circ \dots \circ (W^{(1)}x + b^{(1)})\in \Rb^{m_{\ell+1}}$.
By Assumption~\ref{assmp:1}, we have
\[
|\eta(a_j^{(\ell)}(x)) - \eta(0) + \eta(0)|
\le |\eta(a_j^{(\ell)}(x)) - \eta(0)|+ c_\eta
\le |a_j^{(\ell)}(x)| + c_\eta.
\]
Thus, we obtain
\ba
\|f\|_\infty
&=\| (W^{(L)}\eta(\cdot) + b^{(L)})\circ \dots \circ (W^{(1)}x + b^{(1)})\|_\infty\\
&\le
\sup_{x\in \Xc} \left|
W^{(L)}\eta\left(
(W^{(L-1)}\eta(\cdot) + b^{(L-1)}) \circ \dots \circ (W^{(1)}x + b^{(1)})
\right) \right| + |b^{(L)}|\\
&= \sup_{x\in \Xc} \left|W^{(L)}\eta(a^{(L-1)}(x)) \right| + |b^{(L)}|
\le
\sup_{x\in \Xc} \sum_{j=1}^{m_L} |W_{1j}^{(L)}\eta(a_j^{(L-1)}(x))|  + |b^{(L)}|\\
&=
\|W^{(L)}\|_\infty\sup_{x\in \Xc}\left\| \{\eta(a_j^{(L-1)}(x))\}_{j=1}^{m_L}\right\|_{\max}  + |b^{(L)}|
\le \bar{R}\sup_{x\in \Xc}\left\| \{\eta(a_j^{(L-1)}(x))\}_{j=1}^{m_L}\right\|_{\max}  + R_b\\
&\le \bar{R}\left[ \sup_{x\in \Xc}\left\| \{a_j^{(L-1)}(x)\}_{j=1}^{m_L}\right\|_{\max} + c_\eta\right]  + R_b.
\ea
For $\ell = 2,\dots, L-1$, we have
\ba
\left\| \{a_j^{(\ell)}(x)\}_{j=1}^{m_{\ell+1}}\right\|_{\max}
&=
\left\| \left\{{W_{j,:}^{(\ell)}}^T\eta(a^{(\ell-1)}(x)) + b_j^{(\ell)} \right\}_{j=1}^{m_{\ell}}\right\|_{\max}\\
&\le
\left\|{W^{(\ell)}}\right\|_\infty\left\| \left\{|\eta(a^{(\ell-1)}(x))|\right\}_{j=1}^{m_{\ell}}\right\|_{\max} + |b_j^{(\ell)}|\\
&\le
\bar{R}\left[ \left\| \left\{|a^{(\ell-1)}(x)|\right\}_{j=1}^{m_{\ell}}\right\|_{\max} + c_\eta \right] + R_b.
\ea
Moreover, for $\ell = 1$, 
\[
\left\| \{a_j^{(1)}(x)\}_{j=1}^{m_{2}}\right\|_{\max}
=\left\| \{ {W_{j,:}^{(1)}}^Tx + b_j^{(1)}\}_{j=1}^{m_{2}}\right\|_{\max}
\le \|W^{(1)}\|_\infty\|x\|_{\max} + \left\| b^{(1)}\right\|_{\max}
\le \bar{R}D_x + R_b.
\]
Combining these, we conclude that
\ba
\|f\|_\infty 
&\le 
\bar{R}\sup_{x\in \Xc}\left\| \{a_j^{(L-1)}(x)\}_{j=1}^{m_L}\right\|_{\max} + \bar{R}c_\eta + R_b\\
&\le 
\bar{R}^2\sup_{x\in \Xc}\left\| \{a_j^{(L-2)}(x)\}_{j=1}^{m_L}\right\|_{\max} + \bar{R}^2c_\eta + \bar{R}c_\eta + \bar{R}R_b+R_b\\
&\le 
\bar{R}^{L-1}\sup_{x\in \Xc}\left\| \{a_j^{(1)}(x)\}_{j=1}^{m_L}\right\|_{\max} + c_\eta\sum_{\ell=1}^{L-1}\bar{R}^\ell  +R_b \sum_{\ell=0}^{L-2}\bar{R}^\ell\\
&\le 
\bar{R}^{L-1}\left\{ \bar{R}D_x + R_b \right\} + c_\eta\sum_{\ell=1}^{L-1}\bar{R}^\ell  +R_b \sum_{\ell=0}^{L-2}\bar{R}^\ell\\
&\le 
\bar{R}^{L}D_x+ c_\eta\sum_{\ell=1}^{L-1}\bar{R}^{L-\ell}  +R_b \sum_{\ell=1}^{L}\bar{R}^{L-\ell}
\le 
\bar{R}^{L}D_x + (c_\eta+R_b) \sum_{\ell=1}^{L}\bar{R}^{L-\ell},
\ea
which completes the proof.
%

%
\section{Proof of Lemma~\ref{lemma:difference-bound}}\label{sec:proof-lemma:difference-bound}

Let $f,f^\prime \in \Fcal$ be two functions with parameters $(W^{(\ell)},b^{(\ell)})_{\ell = 1}^L$ and $({W^\prime}^{(\ell)},{b^\prime}^{(\ell)})_{\ell = 1}^L$, respectively.
Assume that $\|W^{(\ell)} - {W^\prime}^{(\ell)}\|_\infty<\epsilon$ and $\|b^{(\ell)}-{b^\prime}^{(\ell)}\|_{\max}<\epsilon\;(\ell = 1,\dots,L)$.
Then, we have
\ba
&\|f-f^\prime\|_\infty\\
&=
\left\|
\left( W^{(L)}\eta(a^{(L-1)}(x)) + b^{(L)}\right) - \left({W^\prime}^{(L)}\eta({a^\prime}^{(L-1)}(x)) + {b^\prime}^{(L)}\right)
\right\|_\infty\\
&\le 
\left\|\left(W^{(L)} - {W^\prime}^{(L)}\right)\eta(a^{(L-1)}(x))\right\|_\infty
+
\left\|  {W^\prime}^{(L)}\left[\eta(a^{(L-1)}(x)) - \eta({a^\prime}^{(L-1)}(x))\right] \right\|_\infty
+
\left| b^{(L)} - {b^\prime}^{(L)} \right|\\
&\le 
\sup_{x\in \Xc}\left[ 
\left\| \left(W^{(L)} - {W^\prime}^{(L)}\right)\right\|_\infty \left\|\eta(a^{(L-1)}(x))\right\|_{\max}
+
\left\|  {W^\prime}^{(L)} \right\|_\infty\left\| \eta(a^{(L-1)}(x)) - \eta({a^\prime}^{(L-1)}(x)) \right\|_{\max}
\right]
+\epsilon\\
&\le \epsilon\sup_{x\in \Xc}\left\|\left\{ \eta(a^{(L-1)}(x)) \right\}_{j=1}^{m_L}\right\|_{\max}
+ \bar{R}\sup_{x\in \Xc}\left\|a^{(L-1)}(x) - {a^\prime}^{(L-1)}(x) \right\|_{\max}+\epsilon\\
&\le \epsilon \left\{ \bar{R}^{L-1}D_x +  c_\eta\sum_{\ell = 2}^{L-1}\bar{R}^{L-\ell} + R_b\sum_{\ell = 2}^{L}\bar{R}^{L-\ell} \right\}
+ \bar{R}\sup_{x\in \Xc}\left\|a^{(L-1)}(x) - {a^\prime}^{(L-1)}(x) \right\|_{\max}+\epsilon.
\ea
For $\ell = 2,\dots,L-1$, it follows that
\ba
&\left\|a^{(\ell)}(x) - {a^\prime}^{(\ell)}(x) \right\|_{\max}\\
&=
\left\|
\left\{
\left( W_{j,:}^{(\ell)}\eta(a^{(\ell-1)}(x)) + b_j^{(\ell)} \right) -  \left(  {W^\prime}_{j,:}^{(\ell)}\eta({a^\prime}^{(\ell-1)}(x)) + {b_j^\prime}^{(\ell)} \right)
\right\}_{j=1}^{m_{\ell+1}}
\right\|_{\max}\\
&=
\left\|
\left\{
\left( W_{j,:}^{(\ell)}- {W^\prime}_{j,:}^{(\ell)}\right)\eta(a^{(\ell-1)}(x)) 
+ {W^\prime}_{j,:}^{(\ell)}\left[\eta(a^{(\ell-1)}(x)) -  \eta({a^\prime}^{(\ell-1)}(x)) \right]+ \left(b_j^{(\ell)} -  {b_j^\prime}^{(\ell)} \right)
\right\}_{j=1}^{m_{\ell+1}}
\right\|_{\max}\\
&\le 
\left\|
\left\{
\left( W_{j,:}^{(\ell)}- {W^\prime}_{j,:}^{(\ell)}\right)\eta(a^{(\ell-1)}(x)) 
\right\}_{j=1}^{m_{\ell+1}}
\right\|_{\max}
+
\left\|
\left\{
{W^\prime}_{j,:}^{(\ell)}\left[\eta(a^{(\ell-1)}(x)) -  \eta({a^\prime}^{(\ell-1)}(x)) \right]
\right\}_{j=1}^{m_{\ell+1}}
\right\|_{\max}
+\epsilon\\
&\le
\left\|
\left\{
\left\|W_{j,:}^{(\ell)}- {W^\prime}_{j,:}^{(\ell)}\right\|_1\left\|\eta(a^{(\ell-1)}(x)) \right\|_{\max}
\right\}_{j=1}^{m_{\ell+1}}
\right\|_{\max}\\
&\qquad\qquad\qquad+
\left\|
\left\{
\left\|{W^\prime}_{j,:}^{(\ell)}\right\|_1\left\|\eta(a^{(\ell-1)}(x)) -  \eta({a^\prime}^{(\ell-1)}(x)) \right\|_{\max}
\right\}_{j=1}^{m_{\ell+1}}
\right\|_{\max}+\epsilon\\
&\le
\left\|W^{(\ell)}- {W^\prime}^{(\ell)}\right\|_\infty\left\|\eta(a^{(\ell-1)}(x)) \right\|_{\max}
+
\left\|{W^\prime}^{(\ell)}\right\|_\infty\left\|\eta(a^{(\ell-1)}(x)) -  \eta({a^\prime}^{(\ell-1)}(x)) \right\|_{\max}
+\epsilon\\
&\le 
\epsilon \left\|\eta(a^{(\ell-1)}(x)) \right\|_{\max} + \bar{R}\left\|a^{(\ell-1)}(x) -  {a^\prime}^{(\ell-1)}(x) \right\|_{\max}
+\epsilon.
\ea
Since 
\[
\left\|\eta(a^{(\ell-1)}(x)) \right\|_{\max} \le \bar{R}^{\ell} D_x + (c_\eta + R_b)\sum_{m = L-\ell + 1}^{L} \bar{R}^{L-m},
\]
we obtain
\ba
\left\|a^{(\ell)}(x) - {a^\prime}^{(\ell)}(x) \right\|_{\max}
\le 
\epsilon\left\{ \bar{R}^{\ell} D_x + (c_\eta + R_b)\sum_{m = L-\ell + 1}^{L} \bar{R}^{L-m} \right\} 
+ \bar{R}\left\|a^{(\ell-1)}(x) -  {a^\prime}^{(\ell-1)}(x) \right\|_{\max}
+\epsilon.
\ea
Furthermore, it follows that
\ba
\left\|a^{(1)}(x) - {a^\prime}^{(1)}(x) \right\|_{\max}
&\le 
\left\|
\left\{
\left( W_{j,:}^{(1)}x + b_j^{(1)} \right)- \left( {W^\prime}_{j,:}^{(1)}x + {b^\prime}_j^{(1)} \right) 
\right\}_{j=1}^{m_2}
\right\|_{\max}\\
&\le
\left\|
W^{(1)} - {W^\prime}^{(1)}
\right\|_\infty\|x\|_{\max} + \epsilon
\le 
\epsilon D_x + \epsilon.
\ea
These inequalities yield
\ba
&\|f-f^\prime\|_\infty\\
&\le \epsilon \left\{ \bar{R}^{L-1}D_x +  (c_\eta+R_b)\sum_{\ell = 2}^{L}\bar{R}^{L-\ell} \right\}
+ \bar{R}\sup_{x\in \Xc}\left\|a^{(L-1)}(x) - {a^\prime}^{(L-1)}(x) \right\|_{\max}+\epsilon\\
&\le 
\epsilon \left\{ \bar{R}^{L-1}D_x +  (c_\eta+R_b)\sum_{\ell = 2}^{L}\bar{R}^{L-\ell} \right\}\\
&\quad+
\bar{R}
\left\{
\epsilon\left[ \bar{R}^{L-2} D_x + (c_\eta + R_b)\sum_{m =  3}^{L} \bar{R}^{L-m} \right]
+ \sup_{x\in \Xc}\bar{R}\left\|a^{(L-2)}(x) -  {a^\prime}^{(L-2)}(x) \right\|_{\max}
+\epsilon
\right\}
+\epsilon\\
&\le 
\epsilon 
\left\{
\left[ \bar{R}^{L-1}D_x +  (c_\eta+R_b)\sum_{\ell = 2}^{L}\bar{R}^{L-\ell} \right]
+\bar{R}\left[ \bar{R}^{L-2} D_x + (c_\eta + R_b)\sum_{m =  3}^{L} \bar{R}^{L-m} \right]
\right\}+\left\{ \epsilon + \bar{R}\epsilon\right\}\\
&\quad + \sup_{x\in \Xc}\bar{R}^2\left\|a^{(L-2)}(x) -  {a^\prime}^{(L-2)}(x) \right\|_{\max}\\
&\le 
\epsilon 
\left\{
\sum_{\ell = 2}^{L}\bar{R}^{L-\ell}\left[ \bar{R}^{\ell-1}D_x +  (c_\eta+R_b)\sum_{m = L- \ell + 1}^{L}\bar{R}^{L-m} \right]
\right\} + \epsilon \sum_{\ell = 2}^{L}\bar{R}^{L-\ell}\\
&\quad+ 
\bar{R}^{L-1}\sup_{x\in \Xc}\left\|a^{(1)}(x) -  {a^\prime}^{(1)}(x) \right\|_{\max}\\
&\le
\epsilon 
\left\{
\sum_{\ell = 2}^{L}\bar{R}^{L-\ell}\left[ \bar{R}^{\ell-1}D_x 
+  (c_\eta+R_b)\sum_{m= 0}^{\ell-1}\bar{R}^{m} \right]
\right\} + \epsilon \sum_{\ell = 2}^{L}\bar{R}^{L-\ell}
+
\bar{R}^{L-1}\left\{ \epsilon D_x + \epsilon \right\}\\
&\le
\epsilon 
\left\{
\sum_{\ell = 1}^{L}\bar{R}^{L-1}D_x +  (c_\eta+R_b)
\sum_{\ell = 2}^{L}\sum_{m= 0}^{\ell-1}\bar{R}^{L-\ell+m} 
\right\} + \epsilon \sum_{\ell = 1}^{L}\bar{R}^{L-\ell}\\
&\le
\epsilon 
\left\{
L\bar{R}^{L-1}\left[ D_x +  L(c_\eta+R_b) \right]
+  \sum_{\ell = 1}^{L}\bar{R}^{L-\ell}
\right\},
\ea
and the lemma follows.
%

%
\section{Proof of Proposition~\ref{prop:covering-number}}\label{sec:proof-prop:covering-number}

Let
\[
\hat{G} = L\bar{R}^{L-1}\left[ D_x +  L(c_\eta+R_b) \right]
+  \sum_{\ell = 1}^{L}\bar{R}^{L-\ell}.
\]
Let $f,f^\prime \in \Fcal$ be two functions with parameters $(W^{(\ell)},b^{(\ell)})_{\ell = 1}^L$ and $({W^\prime}^{(\ell)},{b^\prime}^{(\ell)})_{\ell = 1}^L$, respectively.
From Lemma~\ref{lemma:difference-bound}, 
we have $\|f-f^\prime\|_\infty \le \delta$
if $\|W^{(\ell)} - {W^\prime}^{(\ell)}\|_\infty<\delta/\hat{G},\;\|b^{(\ell)}-{b^\prime}^{(\ell)}\|_{\max}<\delta/\hat{G}\;(\ell = 1,\dots,L)$.

In much the same way as Section 4.2 in \cite{Vershynin18}, 
we will derive the following upper bound for the covering number $N(\epsilon, B_p^d, \|\cdot\|_p)$ of the unit $d$-dimensional $L_p$-ball $B_p^d\subset \Rb^d$ under the $L_p$ distance:
\banum
N(\epsilon, B_p^d, \|\cdot\|_p) \le \left(1 + \frac{2}{\epsilon}\right)^d.\label{eq:cn-ball}
\eanum
For $A\subset \Rb^d$, let $\mathrm{Vol}(A)$ denote the volume of $A$.
Here, we note that, for any $a>0$, $\mathrm{Vol}(aA) = a^d\mathrm{Vol}(A)$, where $aA=\{ax \mid x\in A\}$.
We will denote by $\{c_1,\dots,c_m\}$ an $\epsilon$-packing. That is, $\{c_1,\dots,c_m\}$ satisfies $\min_{i\neq j}\|c_i-c_j\|\ge \epsilon$.
Denote $B_p^d(c, \delta):=\{x\in \Rb^d \mid \|c - x \|_p\le \delta  \}$.
Then, the interiors of balls $B_p^d(c_i, \epsilon/2)\;(i = 1,\dots,m)$ are disjoint, and 
\[
B_p^d(c_i, \epsilon/2) \subset B_p^d + \frac{\epsilon}{2}B_p^d\quad(i = 1,\dots,m),
\]
where $A+B$ denotes the Minkowski sum of two sets $A$ and $B$.
Hence, 
\[
\mathrm{Vol}\left(B_p^d + \frac{\epsilon}{2}B_p^d \right) \ge m \left(\frac{\epsilon}{2}\right)^d\mathrm{Vol}( B_p^d).
\]
It follows that the $\epsilon$-packing number $M(\epsilon, B_p^d, \|\cdot\|_p)$, that is, the largest possible cardinality of an $\epsilon$-packing of $B_p^d$, 
can be bounded by
\[
M(\epsilon, B_p^d, \|\cdot\|_p) \le \left(\frac{2}{\epsilon}\right)^d\frac{\mathrm{Vol}\left(B_p^d + \frac{\epsilon}{2}B_p^d \right)}{\mathrm{Vol}( B_p^d)},
\]
which gives $(\ref{eq:cn-ball})$ combined with Lemma 4.2.8 of \cite{Vershynin18}.
Accordingly, the $\epsilon$-covering number of $\Fcal$ is bounded as follows:
\ba
&\log N(\epsilon, \Fcal, \|\cdot\|_\infty)\\
&\le 
\sum_{\ell = 1}^L \log N( \epsilon/\hat{G}, B_\infty^{m_{\ell+1}\times m_\ell}(\bar{R}), \|\cdot\|_\infty)
+
\sum_{\ell = 1}^L \log N( \epsilon/\hat{G}, B_{\max}^{m_{\ell}}(R_b), \|\cdot\|_{\max})\\
&\le 
\sum_{\ell = 1}^L \log \prod_{i= 1}^{m_{\ell+1}}N( \epsilon/(\hat{G}\bar{R}), B_1^{m_\ell}(1), \|\cdot\|_1)
+
\sum_{\ell = 1}^L \log N( \epsilon/(\hat{G}R_b), B_{\max}^{m_{\ell}}(1), \|\cdot\|_{\max})\\
&\le
\sum_{\ell = 1}^L \log \prod_{i= 1}^{m_{\ell+1}} \left\{ 1 + \frac{2}{\epsilon/(\hat{G}\bar{R})} \right\}^{m_\ell}
+
\sum_{\ell = 1}^L \log \left\{ 1 + \frac{2}{\epsilon/(\hat{G}R_b)}\right\}^{m_\ell}\\
&\le 
\log\left\{ 1 + \frac{2\hat{G}\max\{\bar{R},R_b\}}{\epsilon} \right\}
\left\{ \sum_{\ell = 1}^L (m_{\ell+1}+1) m_\ell \right\}.
\ea
%

%
\section{Proof of Theorem~\ref{theorem:main}}\label{sec:proof-theorem:main}

We remark that the approach to derive the tight upper bound of the generalization error from the above results is the same as that of \cite{Suzuki18}.
Nevertheless, here we provide the detailed proof of Theorem~\ref{theorem:main} just for the sake of completeness and self-containedness.
We can clearly see that the scale invariance of the activation functions is not required in the following derivation.

For fixed input $x_1,\dots,x_n\in \Rb^{d_X}$ and for $f\in \Fcal$, 
define $\|f\|_n^2 := \sum_{i=1}^n f(x_i)^2/n$.

\subsection{Evaluation of $\|\hat{f}-f^\ast\|_n$}
Let 
\[
\Gcal_\delta := \{f-f^\ast \mid \|f-f^\ast\|_n\le \delta , f\in \Fcal\}.
\]
For $g\in \Gcal_{2\delta} $, we will consider the following process $X$:
\[
X: \Gcal_{2\delta} \rightarrow \Rb,\quad g \mapsto \frac{1}{\sqrt{n}}\sum_{i=1}^n \frac{\xi_i}{\sigma}g(x_i).
\]
Here, note that $X$ is a sub-Gaussian process.
According to Theorem 2.5.8 in \cite{GineNickl15}, we have
 \banum
 \Pb\left( 
 \sup_{g \in \Gcal_{2\delta}}\left| \frac{1}{n}\sum_{i=1}^n\xi_i g(x_i) \right| 
 \ge \Eb \left[ \sup_{g \in \Gcal_{2\delta}}\left| \frac{1}{n}\sum_{i=1}^n\xi_i g(x_i) \right| \right] 
 + 2\delta r\right) 
 \le \exp\left\{ -\frac{nr^2}{2\sigma^2} \right\}.\label{eq:theorem-2.5.8}
\eanum
Note that $f\in\Fcal$ implies $f-f^\ast\in\Gcal_{2\hat{R}_\infty}$.
Applying the inequality $(\ref{eq:theorem-2.5.8})$ for $\delta_j=2^{j-1}\sigma/\sqrt{n}\;(j=1,\dots,\lceil\log_2(\hat{R}_\infty\sqrt{n}/\sigma) \rceil + 1)$ repeatedly, 
we see that, for any $\delta\ge \sigma/\sqrt{n}$,
\ba
&\Pb\left( 
\bigcup_{j = 1}^{\lceil\log_2(\hat{R}_\infty\sqrt{n}/\sigma) \rceil + 1}\left\{ 
 \sup_{g\in \Gcal_{\delta_j}}\left| \frac{1}{n}\sum_{i=1}^n\xi_i g(x_i) \right| 
 \ge \Eb \left[ \sup_{g \in \Gcal_{\delta_j}}\left| \frac{1}{n}\sum_{i=1}^n\xi_i g(x_i) \right| \right] 
 + \delta_j r\right\}
 \right)\\
 &\le 
 (\lceil\log_2(\hat{R}_\infty\sqrt{n}/\sigma) \rceil+1)\times \exp\left\{ -\frac{nr^2}{2\sigma^2} \right\}.
\ea

Now, we will evaluate the expectation $\Eb \left[ \sup_{g \in \Gcal_{2\delta}}\left| \sum_{i=1}^n\xi_i g(x_i)/n \right| \right]$.
We remark that, for any constant $B>0$, a simple computation gives 
\ba
\int_{0}^{2\delta}\sqrt{\log\left(1 + \frac{B}{\epsilon} \right)} \,\drm \epsilon
&\le\int_{0}^{2\delta}\sqrt{\log\left(\frac{2\delta+B}{\epsilon} \right)} \,\drm \epsilon
\le
2\delta \left\{ \sqrt{\log\left(1+\frac{B}{2\delta}\right)}
+\sqrt{\pi}\right\}.
\ea
Since $\|f\|_n\le \|f\|_\infty$, we have $\log(2N(\epsilon,\mathcal{G}_{2\delta},\|\cdot\|_n))\le \log(2N(\epsilon,\Fcal,\|\cdot\|_\infty))$.
From Proposition~\ref{prop:covering-number}, it follows that
\ba
\int_{0}^{2\delta} \sqrt{\log(2N(\epsilon,\mathcal{G}_{2\delta},\|\cdot\|_n))}\,\drm \epsilon
&\le
\int_{0}^{2\delta} \sqrt{\log(2N(\epsilon,\Fcal,\|\cdot\|_\infty))}\,\drm \epsilon\\
&\le 
C \delta\sqrt{\left\{\sum_{\ell = 1}^L (m_{\ell+1}+1) m_\ell\right\}\log_+\left(1+\frac{\hat{G}\max\{\bar{R},R_b\}}{\delta}\right)},
\ea
where $C$ is a universal constant.
By Theorem 2.3.6 in \cite{GineNickl15}, we obtain
\ba
\Eb\left[\left| \sup_{g\in \Gcal_{2\delta}} \frac{1}{n}\sum_{i=1}^n\xi_i g(x_i) \right|  \right]
&=
\frac{\sigma}{\sqrt{n}}\Eb\left[\left| \sup_{g\in \Gcal_{2\delta}} \frac{1}{\sqrt{n}}\sum_{i=1}^n\frac{\xi_i}{\sigma} g(x_i) \right|  \right]\\
&\le
4\sqrt{2}\frac{\sigma}{\sqrt{n}}\int_{0}^{2\delta} \sqrt{\log(2N(\epsilon,\mathcal{G}_{2\delta},\|\cdot\|_n))}\,\drm \epsilon\\
&\le 
4\sqrt{2}\frac{\sigma}{\sqrt{n}}C \delta\sqrt{\left\{\sum_{\ell = 1}^L (m_{\ell+1}+1) m_\ell\right\}\log_+\left(1+\frac{\hat{G}\max\{\bar{R},R_b\}}{\delta}\right)}\\
&\le 
C^\prime \sigma \delta\sqrt{\frac{\sum_{\ell = 1}^L m_{\ell+1} m_\ell}{n}\log_+\left(1+\frac{\hat{G}\max\{\bar{R},R_b\}}{\delta}\right)},
\ea
where $C$ and $C^\prime$ are universal constants.

In these inequalities, we take $\delta$ and $r$ as
\[
\delta \leftarrow \left( \|f-f^\ast\|_n \vee \sigma\sqrt{\frac{\sum_{\ell = 1}^L m_{\ell+1} m_\ell}{n}}\right) \;\text{ and }\;
r \leftarrow \sigma r/\sqrt{n},
\]
respectively.
Then, by $ab\le a^2/4 + b^2$ and $(a+b)^2 \le 2(a^2 + b^2)$, 
with probability at least $1-(\lceil\log_2(\hat{R}_\infty\sqrt{n}/\sigma) \rceil+1) \exp(-r^2/2)$, 
we have that, uniformly for all $f\in \Fcal$, 
\banum
&\left|\frac{1}{n}\sum_{i=1}^n\xi_i \{f(x_i) - f^\ast(x_i) \}\right|\nonumber\\
&\le 
C \sigma \left( \|f-f^\ast\|_n \vee \sigma\sqrt{\frac{\sum_{\ell = 1}^L m_{\ell+1} m_\ell}{n}}\right)
\sqrt{\frac{\sum_{\ell = 1}^L m_{\ell+1} m_\ell}{n}
\log_+\left(1+\frac{\sqrt{n}\hat{G}\max\{\bar{R},R_b\}}{\sigma \sqrt{\sum_{\ell = 1}^L m_{\ell+1} m_\ell}}\right)}\nonumber\\
&\quad+2\left( \|f-f^\ast\|_n \vee \sigma\sqrt{\frac{\sum_{\ell = 1}^L m_{\ell+1} m_\ell}{n}}\right)  \sigma \frac{r}{\sqrt{n}}\nonumber\\
&\le
\frac{1}{4}\left( \|f-f^\ast\|_n \vee \sigma\sqrt{\frac{\sum_{\ell = 1}^L m_{\ell+1} m_\ell}{n}}\right)^2\nonumber\\
&\quad 
+2C^2\sigma^2\left\{\frac{\sum_{\ell = 1}^L m_{\ell+1} m_\ell}{n}
\log_+\left(1+\frac{\sqrt{n}\hat{G}\max\{\bar{R},R_b\}}{\sigma \sqrt{\sum_{\ell = 1}^L m_{\ell+1} m_\ell}}\right)
+ 4\frac{r^2}{n}\right\}.\label{eq:complex-inequality1}
\eanum
Let
\[
\Psi_{r,n}:=2C^2\sigma^2\left\{\frac{\sum_{\ell = 1}^L m_{\ell+1} m_\ell}{n}
\log_+\left(1+\frac{\sqrt{n}\hat{G}\max\{\bar{R},R_b\}}{\sigma \sqrt{\sum_{\ell = 1}^L m_{\ell+1} m_\ell}}\right)
+ 4\frac{r^2}{n}\right\}.
\]
From the optimality of $\hat{f}$ for the empirical risk, it follows that
\ba
&\frac{1}{n}\sum_{i=1}^n \{y_i - \hat{f}(x_i)\}^2
\le 
\frac{1}{n}\sum_{i=1}^n\{y_i - f^\ast(x_i)\}^2\\
\rightharpoonup\;\;
&
\|\hat{f}\|_n^2 - \|f^\ast\|_n^2
+\frac{2}{n} \sum_{i=1}^n y_i \{f^\ast(x_i)-\hat{f}(x_i)\}
\le 0\\
\rightharpoonup\;\;
&
\|\hat{f}\|_n^2 - \|f^\ast\|_n^2
+\frac{2}{n} \sum_{i=1}^n \{\xi_i + f^o(x_i)\} \{f^\ast(x_i)-\hat{f}(x_i)\}
\le 0\\
\rightharpoonup\;\;
&
\frac{2}{n} \sum_{i=1}^n \xi_i\{f^\ast(x_i)-\hat{f}(x_i)\}
+\|\hat{f}\|_n^2 + \|f^o\|_n^2 
+\frac{2}{n} \sum_{i=1}^n f^o(x_i)\{f^\ast(x_i)-\hat{f}(x_i)\}
\le  \|f^\ast\|_n^2 + \|f^o\|_n^2\\
\rightharpoonup\;\;
&
\frac{2}{n} \sum_{i=1}^n \xi_i\{f^\ast(x_i)-\hat{f}(x_i)\}
+\|\hat{f}-f^o\|_n^2
\le  \|f^\ast-f^o\|_n^2.
\ea
Hence, the inequality $(\ref{eq:complex-inequality1})$ implies
\ba
-\frac{1}{2}\left( \|\hat{f}-f^\ast\|_n \vee \sigma\sqrt{\frac{\sum_{\ell = 1}^L m_{\ell+1} m_\ell}{n}}\right)^2
-\Psi_{r,n}/2
+\|\hat{f}-f^o\|_n^2
\le  \|f^\ast-f^o\|_n^2.
\ea
If $\|\hat{f}-f^\ast\|_n \ge \sigma\sqrt{\sum_{\ell = 1}^L m_{\ell+1} m_\ell/n}$, 
it follows that
\ba
&-\frac{1}{2}\|\hat{f}-f^\ast\|_n^2
-\frac{1}{2}\Psi_{r,n}
+\|\hat{f}-f^o\|_n^2
\le  \|f^\ast-f^o\|_n^2\\
\rightharpoonup\;\;
&
-\frac{1}{2}\|\hat{f}-f^\ast\|_n^2
-\frac{1}{2}\Psi_{r,n}
+\frac{3}{4}\|\hat{f}-f^\ast\|_n - 3\|f^\ast-f^o\|_n
\le  \|f^\ast-f^o\|_n^2\\
\rightharpoonup\;\;
&
\frac{1}{4}\|\hat{f}-f^\ast\|_n
\le  4\|f^\ast-f^o\|_n^2+\frac{1}{2}\Psi_{r,n}.
\ea
Therefore, the inequality $(\ref{eq:complex-inequality1})$ gives
\banum
\|\hat{f}-f^\ast\|_n
\le  16\|f^\ast-f^o\|_n^2+2\Psi_{r,n} + \frac{\sigma^2\sum_{\ell = 1}^L m_{\ell+1} m_\ell}{n}.\label{eq:result-e.1}
\eanum

\subsection{Evaluation of $\|\hat{f}-f^\ast\|_{L_2(P_X)}$}

Based on the inequality $(\ref{eq:result-e.1})$, 
we will derive the upper bound for $\|\hat{f}-f^\ast\|_{L_2(P_X)}$.
Let $\Gcal_{\delta}^\prime=\{f-f^\ast \mid \|f-f^\ast\|_{L_2(P_X)}\le \delta ,f\in \Fcal\}$.
For $g\in \Gcal_\delta^\prime$, we have $\|g\|_\infty \le 2\hat{R}_\infty$, which gives
\[
\Eb[g(X_i)^4] = \int \{f(x) - f^\ast(x)\}^4 P_X(dx)\le 4\hat{R}_\infty^2\delta^2.
\]
Here, it follows that, for $g\in \Gcal_\delta^\prime$,
\[
\Eb[\{g(X_i)^2 - \Eb[g^2]\}^2]=\Eb[g(X_i)^4] - \{\Eb[g^2]\}^2\le \Eb[g(X_i)^4]\le 4\hat{R}_\infty^2\delta^2=:\tau^2
\]
and
\[
\|g^2 - \Eb[g^2]\|_\infty\le \sup_{x\in X}|g(x)^2-\Eb[g^2]| \le \max\{\sup_{x\in X}g(x)^2,\Eb[g^2]\}\le 4\hat{R}_\infty^2=:U.
\]
Set 
\[
S_n = \sup_{g\in \Gcal_\delta^\prime}\left| \sum_{i=1}^n \{ g(X_i)^2 - \Eb[g^2] \} \right|.
\]
By Bousquet's version of Talagrand's inequality (see, Theorem 3.3.9 in \cite{GineNickl15}),
it follows that
\ba
&\Pb\left( S_n \ge \Eb[S_n] + \sqrt{r(4U\Eb[S_n] +2n\tau^2)} + \frac{Ur}{3}\right)
\le \exp(-r).
\ea
Since
\ba
\sqrt{r(4U\Eb[S_n] +2n\tau^2)} \le 2\sqrt{rU\Eb[S_n])}+ \sqrt{2n\tau^2r}
\le rU + \Eb[S_n]+ \sqrt{2n\tau^2r},
\ea
we have
\ba
\Pb\left( S_n \ge 2\Eb[S_n]  + \sqrt{2n\tau^2}+ \frac{4Ur}{3}\right)
\le \exp(-r).
\ea
Combining these, we can see that there exists a universal constant $C$ such that
\banum
&\Pb\left( \sup_{g\in \Gcal_\delta^\prime}\left| \frac{1}{n}\sum_{i=1}^n \{ g(X_i)^2 - \Eb[g^2] \} \right| 
\ge C \left\{ 
\Eb\left[\sup_{g\in \Gcal_\delta^\prime}\left| \frac{1}{n}\sum_{i=1}^n \{ g(X_i)^2 - \Eb[g^2] \} \right|\right]  + \sqrt{\frac{\hat{R}_\infty^2\delta^2 r}{n}}+ \frac{\hat{R}_\infty^2 r}{n}
\right\}
\right)\nonumber\\
&\le \exp(-r).\label{eq:bound-theorem3.3.9}
\eanum

Now, we will consider the upper bound of $\Eb\left[\sup_{g\in \Gcal_\delta^\prime}\left| \sum_{i=1}^n \{ g(X_i)^2 - \Eb[g^2] \}/n \right|\right]$.
Let $\epsilon_1,\dots,\epsilon_n$ be an i.i.d. Rademacher sequence.
Then, by the usual result of Rademacher complexity (see, e.g., Lemma 2.3.1 in \cite{vanderVaartWellner96}),
it follows that
\ba
\Eb\left[\sup_{g\in \Gcal_\delta^\prime}\left| \frac{1}{n}\sum_{i=1}^n \{ g(X_i)^2 - \Eb[g^2] \} \right|\right] 
\le 2\Eb\left[\sup_{g\in \Gcal_\delta^\prime}\left| \frac{1}{n}\sum_{i=1}^n \epsilon_ig(X_i)^2 \right|\right] .
\ea
By the comparison theorem (Theorem 4.12 of \cite{LedouxTalagrand91}) with $\|g\|_\infty\le 2\hat{R}_\infty$, 
we have
\ba
2\Eb\left[\sup_{g\in \Gcal_\delta^\prime}\left| \frac{1}{n}\sum_{i=1}^n \epsilon_ig(X_i)^2 \right|\right] 
\le 
4(2\hat{R}_\infty) \Eb\left[\sup_{g\in \Gcal_\delta^\prime}\left| \frac{1}{n}\sum_{i=1}^n \epsilon_ig(X_i) \right|\right].
\ea
By using the derivation in the proof of Proposition 2.1 of \cite{GineGuillou01}, 
we derive the upper bound of the right hand side.
Write
\ba
Y:=\sup_{g \in \Gcal_\delta^\prime}\frac{1}{n}\sum_{i=1}^n g(X_i)^2.
\ea
Then, by Theorem 1.7 of \cite{Mendelson02} or Corollary 5.1.8 of \cite{delaPenaGine99}, 
we have
\[
T:=\frac{1}{\sqrt{n}} \Eb_\epsilon \left[ \sup_{g \in \Gcal_\delta^\prime}\left| \sum_{i=1}^n\epsilon_i g(X_i) \right| \right]
\le 
C \int_{0}^{\sqrt{Y}} \sqrt{\log\{N(\epsilon,\Gcal_\delta^\prime, \|\cdot\|_n)\}} \,\drm \epsilon,
\]
where $C$ is a universal constant.
From Proposition~\ref{prop:covering-number}, it follows that
 \ba
 \int_{0}^{\sqrt{Y}} \sqrt{\log\{N(\epsilon,\Gcal_\delta^\prime, \|\cdot\|_n)\}} \,\drm \epsilon
 &\le 
 C^\prime\sqrt{Y}\sqrt{\left\{\sum_{\ell = 1}^L (m_{\ell+1}+1) m_\ell\right\}\log_+\left(1+\frac{2\hat{G}\max\{\bar{R},R_b\}}{\sqrt{Y}}\right)},
 \ea
for another universal constant $C^\prime$.
By Jensen's inequality, we have
\ba
\Eb\left[ \sqrt{Y}\sqrt{\log_+\left(1+\frac{2\hat{G}\max\{\bar{R},R_b\}}{\sqrt{Y}}\right)}\right]
&\le 
2\sqrt{\Eb\left[Y\right]}\sqrt{\log_+\left(1+\frac{2\hat{G}\max\{\bar{R},R_b\}}{\sqrt{\Eb\left[Y\right]}}\right)}.
\ea
From Corollary 3.4 of \cite{Talagrand94}, it follows that
 \ba
 \Eb\left[ \sup_{g \in \Gcal_\delta^\prime}\sum_{i=1}^n g(X_i)^2 \right]
 \le n \delta^2 + 8(2\hat{R}_\infty)\Eb \left[ \sup_{g \in \Gcal_\delta^\prime}\left| \sum_{i=1}^n\epsilon_i g(X_i) \right| \right].
 \ea
 Thus, 
  \ba
 \Eb\left[ Y \right]
 \le \delta^2 + \frac{8(2\hat{R}_\infty)}{n}\Eb \left[ \sup_{g \in \Gcal_\delta^\prime}\left| \sum_{i=1}^n\epsilon_i g(X_i) \right| \right]
 = \delta^2 + \frac{8(2\hat{R}_\infty)}{\sqrt{n}}T.
 \ea
By the monotonicity of $\sqrt{x}\log(1+B/\sqrt{x})$, we have
 \ba
 \sqrt{\Eb\left[Y\right]}\sqrt{\log_+\left(1+\frac{2\hat{G}\max\{\bar{R},R_b\}}{\sqrt{\Eb\left[Y\right]}}\right)}
 &\le
  \sqrt{\delta^2 + \frac{8(2\hat{R}_\infty)}{\sqrt{n}}T}\sqrt{\log_+\left(1+\frac{2\hat{G}\max\{\bar{R},R_b\}}{\delta}\right)},
 \ea
 Let us introduce the temporary notation $A$ for $\sum_{\ell = 1}^L (m_{\ell+1}+1) m_\ell$.
Combining these, we get
 \ba
T:=\frac{1}{\sqrt{n}} \Eb_\epsilon \left[ \sup_{g \in \Gcal_\delta^\prime}\left| \sum_{i=1}^n\epsilon_i g(X_i) \right| \right]
&\le 
C \int_{0}^{\sqrt{Y}} \sqrt{\log\{N(\epsilon,\Gcal_\delta^\prime, \|\cdot\|_n)\}} \,\drm \epsilon\\
&\le
C \sqrt{\delta^2 + \frac{8(2\hat{R}_\infty)}{\sqrt{n}}T}\sqrt{A\log_+\left(1+\frac{2\hat{G}\max\{\bar{R},R_b\}}{\delta}\right)},
\ea
where $C$ is a universal constant. 
Therefore, we obtain
 \ba
 T^2 \le C A \delta^2\log_+\left(1+\frac{2\hat{G}\max\{\bar{R},R_b\}}{\delta}\right) +C\frac{8(2\hat{R}_\infty)}{\sqrt{n}}A\log_+\left(1+\frac{2\hat{G}\max\{\bar{R},R_b\}}{\delta}\right)T.
 \ea
A simple calculation leads
\ba
T \le\; 
&
C\Biggl\{
\hat{R}_\infty\frac{\sum_{\ell = 1}^L m_{\ell+1}m_\ell}{\sqrt{n}}\log_+\left(1+\frac{2\hat{G}\max\{\bar{R},R_b\}}{\delta}\right)\\
&\qquad\vee
\delta\sqrt{\left(\sum_{\ell = 1}^L m_{\ell+1}m_\ell\right) \log_+\left(1+\frac{2\hat{G}\max\{\bar{R},R_b\}}{\delta}\right)}
\Biggr\}.
\ea
Therefore, we can conclude that
\ba
\Eb\left[\sup_{g\in \Gcal_\delta^\prime}\left| \frac{1}{n}\sum_{i=1}^n \{ g(X_i)^2 - \Eb[g^2] \} \right|\right]
&\le 2\Eb\left[\sup_{g\in \Gcal_\delta^\prime}\left| \frac{1}{n}\sum_{i=1}^n \epsilon_ig(X_i)^2 \right|\right] 
\le 4(2\hat{R}_\infty) \Eb\left[\sup_{g\in \Gcal_\delta^\prime}\left| \frac{1}{n}\sum_{i=1}^n \epsilon_ig(X_i) \right|\right]\\
&\le
C\bigggl\{
\hat{R}_\infty^2\frac{\sum_{\ell = 1}^L m_{\ell+1}m_\ell}{n}\log_+\left(1+\frac{2\hat{G}\max\{\bar{R},R_b\}}{\delta}\right)\\
&\qquad\quad
\vee
\delta\hat{R}_\infty\sqrt{ \frac{\sum_{\ell = 1}^L m_{\ell+1}m_\ell}{n} \log_+\left(1+\frac{2\hat{G}\max\{\bar{R},R_b\}}{\delta}\right)}
\bigggr\}.
\ea

Let
\[
\Phi_n:= \frac{\sum_{\ell = 1}^L m_{\ell+1}m_\ell}{n}\log_+\left(1+\frac{2\sqrt{n}\hat{G}\max\{\bar{R},R_b\}}{\hat{R}_\infty \sqrt{\sum_{\ell = 1}^L m_{\ell+1}m_\ell}}\right).
\]
We apply the inequality $(\ref{eq:bound-theorem3.3.9})$ repeatedly for $\delta=2^{j-1}\hat{R}_\infty/\sqrt{n}\;\;(j=1,\dots,\lceil \log_2(\sqrt{n})\rceil+1)$.
Then, with probability at least $1- (\lceil \log_2(\sqrt{n})\rceil+1)\exp(-r)$, we have that
\ba
&\left|
\frac{1}{n}\sum_{i=1}^n\{\hat{f}(X_i) - f^\ast(X_i)\}^2 - \Eb[\{\hat{f}(X)-f^\ast(X)\}^2]
\right|\\
&\le 
C_1 \left\{ 
C\left(\hat{R}_\infty^2\Phi_n\vee \delta(\hat{f}) \hat{R}_\infty\sqrt{\Phi_n} \right)  + \sqrt{\frac{\hat{R}_\infty^2\delta^2 r}{n}}+ \frac{\hat{R}_\infty^2 r}{n}
\right\}\\
&\le
\frac{1}{2}\max\left(\|\hat{f} - f^\ast\|_{L_2(P_X)}^2, \hat{R}_\infty^2 \frac{\sum_{\ell=1}^Lm_{\ell +1}m_\ell}{n}\right) + 
C_2\hat{R}_\infty^2\left( \Phi_n  +  \frac{r}{n} \right),
\ea
where $C, C_1$, and $C_2$ are universal constants, and 
\[
\delta(\hat{f})=\max\left(\|\hat{f} - f^\ast\|_{L_2(P_X)}, \hat{R}_\infty\sqrt{ \frac{\sum_{\ell=1}^Lm_{\ell +1}m_\ell}{n}}\right).
\]
Combining this inequality with the inequality $(\ref{eq:result-e.1})$, we deduce that
\banum
\frac{1}{2}\|\hat{f}-f^\ast\|_{L_2(P_X)}^2
&\le 
16\|f^\ast-f^o\|_n^2+2\Psi_{r,n} + \frac{\sigma^2 + \hat{R}_\infty^2}{n}\sum_{\ell = 1}^L m_{\ell+1} m_\ell
 + C \hat{R}_\infty^2\left( \Phi_n  +  \frac{r}{n} \right),\label{eq:result-e.2}
\eanum
where $C$ is a universal constant.

\subsection{Evaluation of $\|f^\ast - f^o\|_n^2$}

In the inequality $(\ref{eq:result-e.2})$, it remains to be clarified the upper bound of $\|f^\ast - f^o\|_n^2$.
Here, we note that
\[
\|f^\ast - f^o\|_n^2 -\|f^\ast - f^o\|_{L_2(P_X)}  = \frac{1}{n}\sum_{i=1}^n \left\{ (f^\ast(X_i) - f^o(X_i))^2 - \Eb\left[ \{f^\ast(X_i) - f^o(X_i)\}^2\right] \right\}
\]
and that
\[
\left| (f^\ast(X_i) - f^o(X_i))^2 - \Eb\left[ \{f^\ast(X_i) - f^o(X_i)\}^2\right]\right|
\le \|f^\ast - f^o\|_\infty^2.
\]
By the Bernstein's inequality, we have
\ba
\Pb\left(\|f^\ast - f^o\|_n^2 \ge \|f^\ast - f^o\|_{L_2(P_X)} +  t \right)
\le 
\exp\left(-\frac{t^2n}{2(v + t\|f^\ast - f^o \|_\infty/3)} \right),
\ea
where
\ba
v &:= \Eb\left[ \left\{ (f^\ast(X_i) - f^o(X_i))^2 - \Eb\left[ \{f^\ast(X_i) - f^o(X_i)\}^2\right] \right\}^2 \right].
\ea
From Theorem~\ref{theorem:approx-error}, it follows that
\ba
v \le
\Eb\left[ (f^\ast(X_i) - f^o(X_i))^4\right]
\le 
\|f^\ast - f^o\|_\infty^2\|f^\ast - f^o\|_{L_2(P_X)}^2\le \|f^\ast - f^o\|_\infty^2\hat{\delta}_{1,n}^2.
\ea
Thus, substituting $t \leftarrow \tilde{r} \times \|f^\ast - f^o\|_{L_2(P_X)}^2$ for $\tilde{r} \in (0,1]$, 
we obtain 
\ba
\Pb\left(\|f^\ast - f^o\|_n^2 \ge (1+\tilde{r})\hat{\delta}_{1,n}^2 \right)
&\le
\Pb\left(\|f^\ast - f^o\|_n^2 \ge \|f^\ast - f^o\|_{L_2(P_X)} + \tilde{r}\hat{\delta}_{1,n}^2\right)\\
&\le 
\exp\left(-\frac{\tilde{r}^2\hat{\delta}_{1,n}^4n}{2\|f^\ast - f^o \|_\infty \left(\|f^\ast - f^o\|_{L_2(P_X)}^2 + \tilde{r}\hat{\delta}_{1,n}^2/3\right)} \right)\\
&\le
\exp\left(-\frac{3n\hat{\delta}_{1,n}^2}{8}\frac{\tilde{r}^2}{\|f^\ast - f^o \|_\infty} \right)
\le
\exp\left(-\frac{3n\hat{\delta}_{1,n}^2\tilde{r}^2}{32} \right).
\ea
Therefore, for any $\tilde{r} \in (0,1]$, with probability at least $1-\exp\left(-3n\hat{\delta}_{1,n}^2\tilde{r}^2/32 \right)$, we have 
\banum
\|f^\ast - f^o\|_n^2 \le (1+\tilde{r})\hat{\delta}_{1,n}^2.\label{eq:result-e.3}
\eanum

\subsection{Evaluation of $\|\hat{f}-f^o\|_{L_2(P_X)}^2$ (Final step)}

Combining $(\ref{eq:result-e.2})$ with $(\ref{eq:result-e.3})$, we get
\ba
\|\hat{f}-f^\ast\|_{L_2(P_X)}^2
&\le 
32(1+\tilde{r})\hat{\delta}_{1,n}^2+4\Psi_{r,n} + \frac{2(\sigma^2 + \hat{R}_\infty^2)}{n}\sum_{\ell = 1}^L m_{\ell+1} m_\ell
 + 2C\hat{R}_\infty^2\left( \Phi_n  +  \frac{r}{n} \right).
\ea
Since 
\ba
\|\hat{f}-f^o\|_{L_s(P_X)}^2
&\le 
2\left\{ \|\hat{f}-f^\ast\|_{L_s(P_X)}^2+\|f^\ast - f^o\|_{L_s(P_X)}^2\right\}
\le
2\|\hat{f}-f^\ast\|_{L_s(P_X)}^2 + 2\hat{\delta}_{1,n},
\ea
we have 
\ba
\|\hat{f}-f^\ast\|_{L_2(P_X)}^2
&\le 
(66+64\tilde{r})\hat{\delta}_{1,n}^2+8\Psi_{r_1,n} + \frac{4(\sigma^2 + \hat{R}_\infty^2)}{n}\sum_{\ell = 1}^L m_{\ell+1} m_\ell
 + 4C \hat{R}_\infty^2\left( \Phi_n  +  \frac{r_2}{n} \right)
\ea
with probability at least
\[
1-\exp\left(-3n\hat{\delta}_{1,n}^2\tilde{r}^2/32 \right)-2\log_2(\sqrt{n})\exp(-r_2)-2\log_2(\hat{R}_\infty\sqrt{n}/\sigma) \exp(-r_1^2/2).
\]
Let us introduce the temporary notation
\[
\alpha(x) := x^2\frac{\sum_{\ell = 1}^L m_{\ell+1} m_\ell}{n}
\log_+\left(1+\frac{\sqrt{n}\hat{G}\max\{\bar{R},R_b\}}{x \sqrt{\sum_{\ell = 1}^L m_{\ell+1} m_\ell}}\right)\quad(x\ge 0).
\]
For $r>0$, write
\ba
r_{1}=\sqrt{2\left\{ \log(2\log_2(\hat{R}_\infty\sqrt{n}/\sigma)) + r\right\}},\quad
r_{2}=\log(2\log_2(\sqrt{n})) + r.
\ea
Then, it follows that
\ba
\Psi_{r_1,n} 
&= C \sigma^2\left\{\frac{\sum_{\ell = 1}^L m_{\ell+1} m_\ell}{n}
\log_+\left(1+\frac{\sqrt{n}\hat{G}\max\{\bar{R},R_b\}}{\sigma \sqrt{\sum_{\ell = 1}^L m_{\ell+1} m_\ell}}\right)
+ 8\frac{\log(2\log_2(\hat{R}_\infty\sqrt{n}/\sigma)) + r}{n}\right\}\\
&\le 
C
\left\{
\alpha(\sigma) + \frac{\sigma^2}{n}\log_+\left(\frac{\sqrt{n}}{\sigma/\hat{R}_\infty}\right) + \sigma^2\frac{r}{n}
\right\}
\ea
and that
\ba
\hat{R}_\infty^2\left( \Phi_n  +  \frac{r_2}{n} \right)
&\le 
\hat{R}_\infty^2\frac{\sum_{\ell = 1}^L m_{\ell+1}m_\ell}{n}\log_+\left(1+\frac{2\sqrt{n}\hat{G}\max\{\bar{R},R_b\}}{\hat{R}_\infty \sqrt{\sum_{\ell = 1}^L m_{\ell+1}m_\ell}}\right)
+
\frac{\hat{R}_\infty^2}{n}\left( \log(2\log_2(\sqrt{n})) + r\right)\\
&\le
Const.\times \left\{ \alpha(\hat{R}_\infty)+\frac{\hat{R}_\infty^2}{n} \log_+(\sqrt{n}) + \frac{\hat{R}_\infty^2}{n}r \right\}.
\ea
Accordingly, we have
\ba
\|\hat{f}-f^o\|_{L_2(P_X)}^2
&\le 
C
\left\{
\alpha(\sigma) + \alpha(\hat{R}_\infty) + \frac{\sigma^2 
+ \hat{R}_\infty^2}{n}\left[ \log_+\left(\frac{\sqrt{n}}{\min\{1,\sigma/\hat{R}_\infty\}}\right)  + r\right]
+(1+\tilde{r})\hat{\delta}_{1,n}^2
\right\}
\ea
with probability at least
\[
1-\exp\left(-\frac{3n\hat{\delta}_{1,n}^2\tilde{r}^2}{32} \right)-2\exp(-r).
\]
Let 
\[
\hat{\delta}_{2,n}:=
\sqrt{\frac{\sum_{\ell = 1}^L m_{\ell+1} m_\ell}{n}
\log_+\left(1+\frac{\sqrt{n}\hat{G}\max\{\bar{R},R_b\}}{\min\{\sigma, \hat{R}_\infty\} \sqrt{\sum_{\ell = 1}^L m_{\ell+1} m_\ell}}\right)}.
\]
For any $r>0$ and any $\tilde{r}\in(0,1]$, 
with probability at least
\[
1-\exp\left(-\frac{3n\hat{\delta}_{1,n}^2\tilde{r}^2}{32} \right)-2\exp(-r),
\]
we have
\ba
\|\hat{f}-f^o\|_{L_2(P_X)}^2
&\le 
C
\left\{(1+\tilde{r})\hat{\delta}_{1,n}^2+
(\sigma^2+ \hat{R}_\infty^2)\hat{\delta}_{2,n}^2 + \frac{\sigma^2 
+ \hat{R}_\infty^2}{n}\left[ \log_+\left(\frac{\sqrt{n}}{\min\{1,\sigma/\hat{R}_\infty\}}\right)  + r\right]
\right\}.
\ea
This is our claim.
\end{document}